\documentclass{article}


\usepackage[utf8]{inputenc} 
\usepackage[T1]{fontenc}    
\usepackage{hyperref}       
\usepackage{url}            
\usepackage{booktabs}       
\usepackage{amsfonts}       
\usepackage{nicefrac}       
\usepackage{microtype}      
\usepackage{amsmath}
\usepackage[linesnumbered, ruled]{algorithm2e}


\usepackage[round,sectionbib]{natbib}
\bibliographystyle{plainnat}

\usepackage{graphicx}
\usepackage{lineno}
\usepackage{setspace}

\usepackage{microtype}
\usepackage{graphicx}
\usepackage{subfigure}
\usepackage{booktabs} 
\usepackage{xcolor}
\usepackage{bbm}
\usepackage{mathtools}
\usepackage{verbatim}
\usepackage{enumitem}
\setlist[itemize]{topsep=0pt, leftmargin=3mm}





\newcommand{\E}{\mathbf{E}}
\newcommand{\real}{\mathbb{R}}

\usepackage{multicol}
\usepackage{wrapfig, blindtext}
\usepackage{amsthm}

\title{Regret Balancing for Bandit and RL Model Selection}

\author{%
  Yasin Abbasi-Yadkori \\
  DeepMind\\
  \texttt{yadkori@google.com} \\
  \and
  Aldo Pacchiano \\
  UC Berkeley \\
  \texttt{pacchiano@berkeley.edu} \\
  \and
  My Phan \\
  University of Massachusetts \\
  \texttt{myphan@cs.umass.edu} \\
}

\usepackage{tikz}

\newtheorem{theorem}{Theorem}[section]
\newtheorem{corollary}{Corollary}[section]
\newtheorem{lemma}[theorem]{Lemma}

\allowdisplaybreaks

\begin{document}
\maketitle

\begin{abstract}
We consider model selection in stochastic bandit and reinforcement learning problems. Given a set of base learning algorithms, an effective model selection strategy adapts to the best learning algorithm in an online fashion. We show that by estimating the regret of each algorithm and playing the algorithms such that all empirical regrets are ensured to be of the same order, the overall regret balancing strategy achieves a regret that is close to the regret of the optimal base algorithm. Our strategy requires an upper bound on the optimal base regret as input, and the performance of the strategy depends on the tightness of the upper bound. We show that having this prior knowledge is necessary in order to achieve a near-optimal regret. Further, we show that any near-optimal model selection strategy implicitly performs a form of regret balancing.
\end{abstract}

\section{Introduction}

We study the problem of choosing among a set of learning algorithms in sequential decision-making problems with partial feedback. Learning algorithms are designed to perform well when certain favorable conditions are satisfied. However, the learning agent might not know in advance which algorithm is more appropriate for the current problem that the agent is facing. 

As an example, consider the application of stochastic bandit algorithms in personalization problems, where in each round a user visits the website and the learning algorithm should present the item that is most likely to receive a click or be purchased. When contextual information (such as location, browser type, etc) is available, we might decide to learn a click model given the user context. If the context is not predictive of the user behavior, using a simpler non-contextual bandit algorithm might lead to a better performance. As another example, consider the problem of tuning the exploration rate of bandit algorithms. Typically, the exploration rate in an $\epsilon$-greedy algorithm has the form of $c/t$, where $t$ is time and the optimal value of constant $c$ depends on unknown quantities related to reward vector. The decision rule of the UCB algorithm also involves an exploration bonus~\citep{ACF-2002}. Choosing values smaller than the theoretically suggested value can lead to better performance in practice if the theoretical value is too conservative. However, if the exploration bonus is too small, the regret can be linear. It is desirable to have a model selection strategy that finds a near-optimal parameter value in an online fashion. 

A model selection strategy can also be useful in finding effective reinforcement learning methods. There has been a great number of reinforcement learning algorithms proposed and studied in the literature~\citep{SB-2018, Szepesvari-2010}. In some specialized domains, we might have a reasonable idea of the type of solution that can perform well. In general, however, designing a reinforcement learning solution can be a daunting task as the solution often involves many components. 
In fact, in some problems it is not even clear if we should use a reinforcement learning solution or a simpler contextual bandit solution. 
For example, bandit algorithms are used in many personalization and recommendation problems, although the decisions of the learning system can potentially change the future traffic and inherently we face a Markov decision process. In such problems, the available data might not be enough to solve the problem using an RL algorithm and a simpler bandit solution might be preferable. The complexity of the RL problem is often not known in advance and we would like to adapt to the complexity of the problem in an online fashion. 

While model selection is a well-studied topic in supervised learning, results in the bandit and RL setting are scarce. \cite{MM-2011} propose a method for the model selection problem based on EXP4 with additional uniform exploration. \cite{ALNS-2017} obtain improved results by an online mirror descent method with a carefully selected mirror map. The algorithm is called \textsc{CORRAL}, and under a stability condition, it is shown to enjoy strong regret guarantees. Many bandit algorithms that are designed for stochastic environments (such as UCB, Thompson sampling, etc) do not satisfy the stability condition and thus cannot be directly used as base algorithms for CORRAL. Although it might be possible to make these algorithm stable by proper modifications, the process can be tedious. To overcome this issue, \cite{PPARZLS-2020} propose a generic smoothing procedure that transforms nearly any stochastic algorithm into one that is stable. Results of \cite{ALNS-2017} and \cite{PPARZLS-2020} require the knowledge of the optimal base regret. \cite{FKL-2019} study bandit model selection among linear bandit algorithms when the dimensionality of the underlying linear reward model, and thus the optimal base regret, is not known. A related problem is studied by \cite{CMB-2020}.  

In this paper, we propose a model selection method for bandit and RL problems in stochastic environments. We call our method ``regret balancing" because it maintains regret estimates of base algorithms and tries to keep the \textit{empirical regret} of all algorithms roughly the same. The method achieves regret balancing by playing the base algorithm with the smallest empirical regret. An algorithm can have small empirical regret for two reasons: either it chooses good actions, or it has not been played enough. By playing the algorithm with the smallest empirical regret, the model selection procedure finds an effective trade-off between exploration and exploitation.

The proposed approach has several notable properties. First, no stability condition is needed and any base algorithm without any modifications can be used. Note that when applied to stochastic bandit algorithms, \cite{ALNS-2017} and \cite{PPARZLS-2020} modify the base algorithms to ensure certain stability conditions. Second, our approach is intuitive and almost as simple as a UCB rule. By contrast, many existing model selection approaches have a complicated form. Finally, the approach can be readily applied to reinforcement learning problems. 

The proposed approach, similar to a number of existing solutions, requires the knowledge of the regret of the optimal base algorithm. We show that, in general, any model selection strategy that achieves a near-optimal regret requires either the optimal base regret or direct sampling from the arms. We show that by adding a forced exploration scheme, and hence direct access to the arms, the regret balancing strategy can achieve near-optimal regret in a class of problems without the knowledge of the optimal base regret. 
Further, we show a class of problems where any near-optimal model selection procedure is indeed implementing a regret balancing method, possibly implicitly. 

As we will show, the regret of our model selection strategy is $\Omega(T)$, where $T$ is time horizon. This regret is minimax optimal, given the existing lower bound for the model selection problem that scales as $\Omega(\sqrt{T})$~\citep{PPARZLS-2020}; Even if it is known that a base algorithm has logarithmic regret, the fast logarithmic regret cannot be preserved in general. 

We show a number of applications of the proposed approach for model selection. We show how a near-optimal regret can be achieved in the class of $\epsilon$-greedy algorithms without any prior knowledge of the reward function. We also show how the proposed approach can be used for representation learning in bandit problems. Further, we show a model selection strategy to choose among reinforcement learning algorithms. As a consequence for reinforcement learning, if a set of feature maps are given and the value functions are known to be linear in a feature map belonging to this set, we can use the regret balancing strategy to achieve a regret that is near-optimal up to a constant factor. Finally, the proposed regret balancing strategy can also be used as a bandit algorithm. We show how the approach is implemented as an algorithm for linear stochastic bandits.

\subsection{Problem Definition}

For an integer $A$, we use $[A]$ to denote the set $\{1,2,\dots,A\}$. A contextual bandit problem is a sequential game between a learner and an environment. We consider a set of learners $[M]$. The game is specified by a context space $S$, an action set $[K]$ of size $K$, a reward function $r:S\times [K]\rightarrow [0,1]$, and a time horizon $T$. In round $t\in [T]$, the learner $i\in [M]$ observes the context $s_t\in S$ and chooses an action $a_t\in [K]$ from the action set. Then the learner observes a reward $r_{t} = r(s_t,a_t)+\eta_t$, where for a positive constant $\sigma$, $\eta_t$ is a $\sigma$-sub-Gaussian random variable, meaning that for any $\lambda\in\real$, $\E[e^{\lambda \eta_t}] \le e^{\lambda^2 \sigma^2/2}$. In the special case of linear contextual bandits~\citep{LS-2020}, we are given a feature map $\phi:S\times [K]\rightarrow \real^d$ such that $r(s,a) = \phi(s,a)^\top \theta_*$ for an unknown vector $\theta_*\in\real^d$. Let $\mu_{*,t}= \E(\max_a r(s_t,a))$ be the expected reward of the optimal action at time $t$, where expectation is taken with respect to the randomization in $s_t$ and $\eta_t$. The goal is to have small regret, defined as $C_{i,T} = \sum_{t=1}^T (\mu_{*,t} - r_{t})$. If $\{s_t\}_{t=1}^T$ is an IID sequence, then $\mu_{*,t}$ is the same constant for all rounds and we use $\mu_*$ to denote this value. The game is challenging as the reward function is not known in advance. If $S$ contains only one element, then the problem reduces to the multi-armed bandit problem. If an action influences the distribution of the next context, then the problem is a Markov decision process (MDP) and it is more suitable to define regret with respect to the \textit{policy} that has the highest total (or stationary) reward (See Section~\ref{section:applications} for more details).   

A bandit model selection problem is specified by a class of bandit problems and a set of bandit algorithms. Let $M$ be the number of bandit algorithms (called base algorithms in what follows). As defined above, $C_{i,T}$ is the regret of the $i$th base in the underlying bandit problem if the base algorithm is executed alone. In a bandit model selection problem, the decision making is a two step process. In round $t$, the learner choose base $i_t$ from the set of $M$ bandit algorithms, the base observes the context $s_t$ and selects an action $a_t$ from the set of $K$ actions, and the reward $r_t$ of the action is revealed to the learner. Then the internal state of the base $i_t$ is updated using reward $r_t$. The regret of the overall model selection strategy is  defined with respect to $\mu_{*,t}$:
\[
\text{Regret}_T = \sum_{t=1}^T (\mu_{*,t} - r_{t}) \;.
\]
Let $i_*$ be the optimal base with the smallest regret if it is played in all rounds, $i_* = \arg\min_i C_{i,T}$. We would like to ensure that $\text{Regret}_T = O(C_{i_*,T})$. A reinforcement learning model selection problem is defined similarly (See Section~\ref{section:applications} for more details). 


\section{Regret Balancing}

At a high level, the main idea is to estimate the empirical regret of the base algorithms during the rounds that the algorithms are played, and ensure that all base algorithms suffer roughly the same empirical regret. This simple idea ensures a good trade-off between exploration and exploitation: if a base algorithm is played only for a small number of rounds, or if it plays good actions, then its empirical regret will be small and will be chosen by the model selection procedure. 

\subsection{Bandit Model Selection}
\label{section:bandit}


In this section, we present the regret balancing model selection method. Consider a bandit model selection problem in a stochastic environment. Let $N_{i,t}$ be the number of rounds that base $i$ is played up to but not including round $t$, and let $R_{i,t}$ be the total reward of this base during these $N_{i,t}$ rounds. With an abuse of notation we also use $N_{i,t}$ to denote the set of rounds that base $i$ is selected. Let $S_{i,t}$ be all data in the rounds that base $i$ is played, $S_{i,t} = \{(s_t,a_t,r_t)\,:\, t\in N_{i,t}\}$. Let $\mathbb{H}$ be the space of all such histories for all $i$ and $t$. We use $R_{*,t}$, $N_{*,t}$, and $S_{*,t}$ to denote the quantities related to the optimal base, which was defined earlier in the problem definition. Regret of base $i$ during the $N_{i,t}$ rounds is $G_{i,t} = \sum_{\tau\in N_{i,t}} \mu_{*,\tau} - R_{i,t}$. We assume that a high probability (possibly data-dependent) upper bound on the regret of the optimal base algorithm is known: a function $U:\real\times \mathbb{H}\rightarrow \real$ is given so that for any $\delta\in (0,1)$, with probability at least $1-\delta$, $G_{i_*,t} \le U(\delta, S_{*,t})$ for any $t$.\footnote{We can use different probabilistic guarantees here, and any form used here will also appear in Theorem~\ref{theorem:model-selection-bandit}.} For example, for the UCB algorithm we have $U(\delta, S_{*,t}) = \widetilde O(\sqrt{K t \log(1/\delta)})$,\footnote{We use $\widetilde O$ notation to hide polylogarithmic terms.} and for the OFUL algorithm we have $U(\delta, S_{*,t}) = \widetilde O(\log(\det(V_t)/\delta)\sqrt{t})$, where $V_t$ is an empirical covariance matrix~\citep{APS-2011}. 
Given that $G_{i_*,t}$ is defined with respect to the realized rewards $R_{i_*,t}$, the regret upper bound $U$ should be at least of order $\Omega(\sqrt{t})$.  

Next, we describe the model selection strategy. In round $t$, let $j_t$ be the \textit{optimistic} base and $b_t$ be the optimistic value,
\begin{equation}
\label{eq:emp_regret_k}
j_t = \arg\max_{i\in [M]}\, \frac{R_{i,t}}{N_{i,t}} + \frac{ U(\delta, S_{i,t})}{N_{i,t}}\,,\qquad b_t = \frac{R_{j_t,t}}{N_{j_t,t}} + \frac{ U(\delta, S_{j_t,t})}{N_{j_t,t}} \;.
\end{equation}
Variable $b_t$ estimates the value of the best action. Define the empirical regret of base $i$ by
\[
\widehat G_{i,t} = N_{i,t} b_t - R_{i,t} \;.
\]
Recall the true regret defined by $G_{i,t} = \sum_{\tau\in N_{i,t}} \mu_{*,\tau} - R_{i,t}$. Notice that we have $N_{j_t,t} b_t - R_{j_t,t} = U(\delta, S_{j_t,t})$, i.e. $b_t$ is chosen so that the empirical regret of the optimistic base scales as the target regret of the optimal base. Throughout the game, we play bases to ensure that the empirical regrets of all bases are roughly the same. To be more precise, in time $t$, we choose the base with the smallest empirical regret:
\[
i_t = \arg\min_{i\in [M]}\, \widehat G_{i,t} \;.
\]
This choice will most likely increase the empirical regret of base $i_t$. 
Next theorem shows the model selection guarantee of the regret balancing strategy. 
\begin{theorem}
\label{theorem:model-selection-bandit}
If $\mu_{*,t}=\mu_*$ for a constant $\mu_*$ regardless of time $t$, and if with probability at least $1-\delta$, $G_{i_*,t} \le U(\delta, S_{i_*,t})$ for any $t$, then $\text{Regret}_T \le M \max_i U(\delta, S_{i,T})$ with probability at least $1-\delta$.
\end{theorem}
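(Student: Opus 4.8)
The plan is to condition on the good event $\mathcal{E} = \{\,G_{i_*,t}\le U(\delta,S_{i_*,t})\text{ for all }t\,\}$, which holds with probability at least $1-\delta$ by hypothesis, and then to argue deterministically on $\mathcal{E}$. The first step I would establish is the \emph{optimism} property $b_t\ge\mu_*$ for every round $t$. This is where the assumption $\mu_{*,t}=\mu_*$ enters: since $G_{i_*,t}=N_{i_*,t}\mu_*-R_{i_*,t}\le U(\delta,S_{i_*,t})$ on $\mathcal{E}$, rearranging gives $\mu_*\le \frac{R_{i_*,t}}{N_{i_*,t}}+\frac{U(\delta,S_{i_*,t})}{N_{i_*,t}}$, and because $b_t$ is the maximum of exactly this quantity over all bases, it dominates the value at $i_*$, so $b_t\ge\mu_*$. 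An immediate consequence, which I would record next, is that the empirical regret dominates the true regret uniformly: $\widehat G_{i,t}=N_{i,t}b_t-R_{i,t}\ge N_{i,t}\mu_*-R_{i,t}=G_{i,t}$ for every $i$ and $t$.

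The second step is to use the selection rule to certify that the true regret of any base is small at every round in which that base is actually played. By the defining identity of $b_t$ the optimistic base satisfies $\widehat G_{j_t,t}=U(\delta,S_{j_t,t})$, and since $i_t=\arg\min_i \widehat G_{i,t}$ we get $\widehat G_{i_t,t}\le \widehat G_{j_t,t}=U(\delta,S_{j_t,t})\le \max_j U(\delta,S_{j,t})$. Combining this with the domination $\widehat G\ge G$ from the first step, whenever base $i$ is played at round $t$ its true regret obeys $G_{i,t}\le \max_j U(\delta,S_{j,t})$.

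To finish I would decompose the total regret over bases. Because $\mu_{*,t}=\mu_*$, the regret telescopes into the final per-base (frozen) true regrets, $\text{Regret}_T=\sum_{i=1}^M G_{i,T+1}$ with $G_{i,T+1}=N_{i,T+1}\mu_*-R_{i,T+1}$. For each $i$, let $t_i$ be the last round in which $i$ is played; after $t_i$ both $N_{i,\cdot}$ and $R_{i,\cdot}$ are frozen, so $G_{i,T+1}=G_{i,t_i}+(\mu_*-r_{t_i})$. By the second step $G_{i,t_i}\le \max_j U(\delta,S_{j,t_i})\le \max_j U(\delta,S_{j,T})$, where the last inequality uses that $U$ is non-decreasing as a base's history grows (true for the $\widetilde O(\sqrt{t})$ bounds in the examples). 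Summing over the $M$ bases then gives $\text{Regret}_T\le M\max_j U(\delta,S_{j,T})$, up to the $M$ single-round increments $\sum_i(\mu_*-r_{t_i})=O(M)$, which are lower order and absorbed since $U=\Omega(\sqrt{T})$.

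I expect the main obstacle to be the time-dependence of $b_t$. The naive idea of evaluating all empirical regrets at a common final value $b_{T+1}$ fails: because $\widehat G_{i,t}$ scales with $N_{i,t}$, any upward drift in $b$ after a base's last play inflates its empirical regret by roughly $N_{i,t}(b_{T+1}-b_{t_i})$, so $\widehat G_{i,T+1}$ need not be controlled by $\max_j U$. The resolution, and the crux of the argument, is to compare against the \emph{true} regret $G_{i,T+1}$, which (with $\mu_*$ constant) is frozen after the last play, and to certify it through the empirical regret evaluated at the moment of that last play, which is exactly where the $\arg\min$ selection rule guarantees a small value. The only remaining loose end is the single per-base last-round term, handled by the boundedness of the rewards.
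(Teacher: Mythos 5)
Your proposal is correct and follows essentially the same route as the paper's proof: optimism of $b_t$ via the optimal base's guarantee, the chain $G_{i_t,t}\le\widehat G_{i_t,t}\le\widehat G_{j_t,t}=U(\delta,S_{j_t,t})$ from the $\arg\min$ rule, and a decomposition of the total regret by each base's last play. You are in fact slightly more careful than the paper on the two implicit points --- the single per-base last-round increments (which the paper dismisses via boundedness of instantaneous regret) and the monotonicity of $U$ in the history needed to pass from $U(\delta,S_{\cdot,t_i})$ to $\max_j U(\delta,S_{j,T})$ --- but the argument is the same.
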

\begin{proof}
First, we show that $b_t$ is an optimistic estimate of the average optimal reward. By \eqref{eq:emp_regret_k} and the regret guarantee of the optimal base,
\begin{align}
\label{eq:bt_k}
b_t = \frac{R_{j_t,t}}{N_{j_t,t}} + \frac{ U(\delta, S_{j_t,t})}{N_{j_t,t}} \ge \frac{R_{*,t}}{N_{*,t}} + \frac{ U(\delta, S_{*,t})}{N_{*,t}} \ge \frac{\sum_{\tau\in N_{*,t}} \mu_{*,\tau}}{N_{*,t}} = \mu_*  \;.
\end{align}
Let $i_t$ be the base chosen at time $t$ and $j_t$ be the optimistic base. The cumulative regret of base $i_t$ at time $t$ can be bounded as
\begin{align}
\label{eq:one_base_regret2}
\notag
G_{i_t,t} &= N_{i_t,t} \mu_{*} - R_{i_t,t} \\
\notag
&\le N_{i_t,t} b_{t} - R_{i_t, t} &\text{By \eqref{eq:bt_k}}  \\
\notag
&\le N_{{j_{t}},t} b_{t} - R_{{j_t}, t} &\text{By definition of $i_t$} \\
&= U(\delta, S_{j_{t},t}) \;. &\text{By definition of $j_t$ and $b_t$}
\end{align}
Let $T_i$ be the last time step that base $i$ is played. Given that the instantaneous regret is upper bounded by $1$,  by~\eqref{eq:one_base_regret2} the regret can be bounded as
\begin{align*}
\sum_{i=1}^M G_{i,T} &= \sum_{i=1}^M G_{i,T_i} \le \sum_{i=1}^M U(\delta, S_{j_{T_i}, T_i})\le  M \max_i U(\delta, S_{i,T}) \;.
\end{align*}

\end{proof}

The condition that $\mu_{*,t}=\mu_*$ for a constant $\mu_*$ regardless of time $t$ is needed to ensure that $b_t \ge \sum_{\tau\in N_{i,t}} \mu_{*,\tau}/N_{i,t}$ for any base $i$. The condition holds in the following model selection problems: choosing a feature mapping in a stochastic bandit problem, and choosing the optimal exploration rate among a number of $\epsilon$-greedy algorithms. The condition is also satisfied for choosing between multi-armed bandits and stochastic linear contextual bandits, where $\mu_{*,t} = \mathbf{E} (\max_{i\in [K]} \phi(s_t,i)^\top \theta_*)$ is a time-independent constant value for IID context $s_t$.


As we mentioned earlier, the regret upper bound $U$ should be of order $\Omega(\sqrt{T})$. Thus, our approach can achieve the regret of the optimal base as long as the optimal regret is at least $\Omega(\sqrt{T})$. This observation is consistent with the lower bound argument of \cite{PPARZLS-2020} who show that, in general, $O(\sqrt{T})$ is the best rate that can be achieved by any model selection strategy. Unfortunately, this lower bound implies that in a model selection setting, we can no longer hope to achieve the logarithmic regret bounds that can be usually obtained in stochastic bandit problems. Notice that such logarithmic bounds are shown for the \textit{pseudo-regret} and not for the regret as defined above. The pseudo-regret is the difference of the expected rewards of the optimal arm and the arm played, and is not directly observed by the learner, and it can be estimated only up to an error of order $\Omega(\sqrt{T})$.

\subsection{Applications}
\label{section:applications}

In this section, we show some applications of the regret balancing strategy. 

\subsubsection*{Regret Balancing for Bandits}
\label{section:linear_bandit}

The regret balancing strategy can be used as a bandit algorithm. To use as a multi-armed bandit algorithm, we treat each arm as a base algorithm and we choose $U(\delta, t)=\sqrt{(t/2) \log(1/\delta)}$ as the regret of the optimal arm $a_*$. To see this, notice that by the sub-Gaussianity of the noise, with probability at least $1-\delta$, $G_{a_*,t} = \sum_{\tau\in N_{a_*,t}} \mu_{*} - R_{a_*,t} = \sum_{\tau\in N_{a_*,t}} (\mu_{*} - \mu_* + \eta_t)  = \sqrt{(t/2) \log(1/\delta)}$. In Figure~\ref{fig:RM_vs_UCB}-Left, we compare regret balancing with the UCB algorithm~\citep{ACF-2002} on a 4-armed Bernoulli bandit with means $\{0.1, 0.2, 0.3, 0.4\}$. 
In regret balancing, we treat each arm as a base algorithm and so we use $U(t)=\sqrt{(t/2) \log(1/\delta)}$ with $\delta=0.1$ as the target regret.  

\begin{wrapfigure}{}{0.8\textwidth}
\includegraphics[width = 0.5\linewidth]{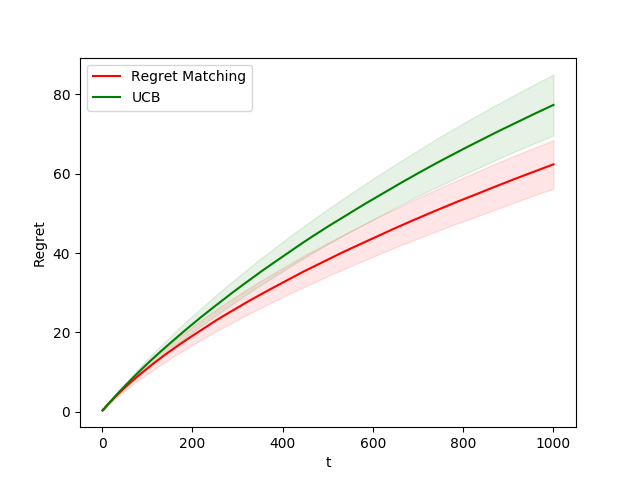} 
\includegraphics[width = 0.5\linewidth]{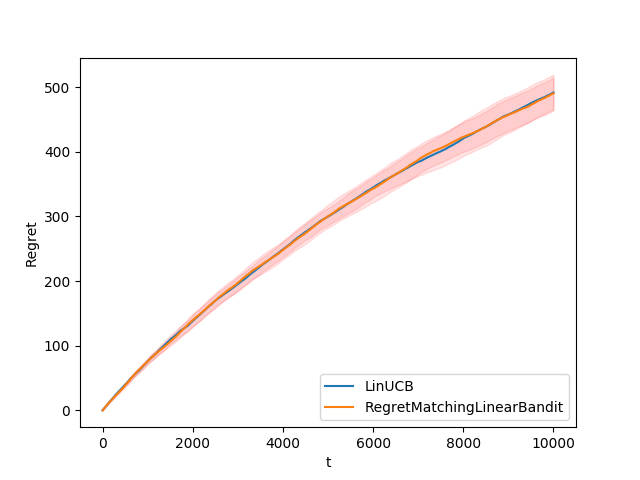} 
\caption{Regret Balancing vs UCB and OFUL. Mean and standard deviation of 2000 and 20 runs.    }
    \label{fig:RM_vs_UCB}
\end{wrapfigure}

Next, we show the implementation of the strategy as an algorithm for the linear stochastic bandits. Consider the following problem. In round $t$, the learner chooses action $x_t$ from a (possibly time varying) decision space that is a subset of the unit sphere $D_t\subset \mathbb{S}^d$ and observes a reward $y_t = x_t^\top \theta_* + \eta_t$, where $\theta_*\in \real^d$ is an unknown parameter vector and $\eta_t$ is a $\sigma$-sub-Gaussian noise term.\footnote{This formulation includes the special case of linear contextual bandits with $D_t = \{\phi(s_t,a)\,:\, a\in [K]\}$.} Let $x_{t,*}$ be the optimal action at time $t$ defined as $x_{t,*} = \arg\max_{x\in D_t} x^\top \theta_*$. 
The objective is to have small regret defined as $\text{Regret}_T = \sum_{t=1}^T (x_{t,*}^\top \theta_* - x_t^\top \theta_*)$. 

We state some notation before defining the bandit method. For a regularization parameter $\lambda>0$, let $V_t = \lambda I +  \sum_{k=1}^{t-1} x_k x_k^\top$ be the empirical covariance matrix, and let $\|z\|_V = \sqrt{z^\top V z}$ be the weighted $\ell^2$-norm of vector $z$. Let $\widehat\theta_t = V_t^{-1}\sum_{k=1}^{t-1} x_k y_k$ be the regularized least-squares estimate. Let $\beta_t(\delta)=O(\sqrt{d \log(t)})$ be as defined in Appendix~\ref{app:useful}. Let $y_t = \arg\max_{x\in D_t}\, x^\top \widehat \theta_t + \beta_t(\delta) \|x\|_{V_t^{-1}}$ 
be the ``optimistic" choice in round $t$. A UCB approach would take action $y_t$ next. Regret balancing, however, uses the optimistic choice to estimate the empirical regrets of different choices. Let  $b_t = y_t^\top \widehat \theta_t + \beta_t(\delta) \|y_t\|_{V_t^{-1}}$, which will be shown to be an upper bound on the value of the best action. In time $t$, we choose the action with the smallest \textit{empirical regret}, 
\[
x_t = \arg\min_{x\in D_t}\, \widehat G_{x,t}\,,\qquad \widehat G_{x,t} = \frac{b_t - x^\top \widehat \theta_t}{\|x\|_{V_t^{-1}}^2}  \;.
\]
Intuitively, $b_t - x^\top \widehat \theta_t$ is an estimate of the instantaneous regret of action $x$ and $1/\|x\|_{V_t^{-1}}^2$ is roughly the number of times that $x$ is played.\footnote{In multi-armed bandits, where actions are fixed axis aligned unit vectors, $1/\|x\|_{V_t^{-1}}^2$ counts the number of times an action is played.} 
Next theorem bounds the regret of the regret balancing strategy. The proof is in Appendix~\ref{app:linear_bandit}.
\begin{theorem}
\label{thm:linear_bandit}
For any $\delta\in (0,1)$, with probability at least $1-\delta$, $\text{Regret}_T = \widetilde O(d^{3/2} \sqrt{T})$. Here $\widetilde O$ hides polylogarithmic terms in $T$, $d$, $\lambda$, and $1/\delta$.
\end{theorem}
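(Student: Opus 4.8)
The plan is to run a standard optimism-based linear-bandit analysis, but with the twist that the played action $x_t$ is the regret-balancing minimizer rather than the optimistic action $y_t$. First I would invoke the self-normalized confidence bound (the $\beta_t(\delta)$ of Appendix~\ref{app:useful}, as in \citep{APS-2011,LS-2020}): on an event of probability at least $1-\delta$, $|x^\top(\widehat\theta_t-\theta_*)|\le \beta_t(\delta)\,\|x\|_{V_t^{-1}}$ simultaneously for all $x$ and all $t$. On this event $b_t$ is optimistic, $b_t \ge x_{t,*}^\top\theta_*$, since $b_t=\max_x[x^\top\widehat\theta_t+\beta_t(\delta)\|x\|_{V_t^{-1}}]\ge x_{t,*}^\top\widehat\theta_t+\beta_t(\delta)\|x_{t,*}\|_{V_t^{-1}}\ge x_{t,*}^\top\theta_*$. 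Writing the instantaneous regret as $x_{t,*}^\top\theta_*-x_t^\top\theta_*$ and combining $b_t\ge x_{t,*}^\top\theta_*$ with $x_t^\top\theta_*\ge x_t^\top\widehat\theta_t-\beta_t(\delta)\|x_t\|_{V_t^{-1}}$, I would reduce the per-round regret to
\[
x_{t,*}^\top\theta_*-x_t^\top\theta_* \le \big(b_t-x_t^\top\widehat\theta_t\big)+\beta_t(\delta)\,\|x_t\|_{V_t^{-1}} = \widehat G_{x_t,t}\,\|x_t\|_{V_t^{-1}}^2+\beta_t(\delta)\,\|x_t\|_{V_t^{-1}}.
\]

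The second term is the familiar OFUL quantity and sums, via Cauchy--Schwarz and the elliptical-potential (log-determinant) lemma, to $\widetilde O(\sqrt d)\cdot\widetilde O(\sqrt{dT})=\widetilde O(d\sqrt T)$. The work is in the first term $\widehat G_{x_t,t}\|x_t\|_{V_t^{-1}}^2 = b_t-x_t^\top\widehat\theta_t$. Here I would exploit the defining property of the balancing rule: since $x_t$ minimizes $\widehat G_{\cdot,t}$ and the optimistic action $y_t$ is a feasible competitor with $\widehat G_{y_t,t}=(b_t-y_t^\top\widehat\theta_t)/\|y_t\|_{V_t^{-1}}^2=\beta_t(\delta)/\|y_t\|_{V_t^{-1}}$, we get $\widehat G_{x_t,t}\le \beta_t(\delta)/\|y_t\|_{V_t^{-1}}$ and hence $b_t-x_t^\top\widehat\theta_t\le \beta_t(\delta)\,\|x_t\|_{V_t^{-1}}^2/\|y_t\|_{V_t^{-1}}$. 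The same two inequalities also give $\|y_t\|_{V_t^{-1}}\le\|x_t\|_{V_t^{-1}}$, so the played action always lies in a direction of at least as much estimation uncertainty as the optimistic one --- this is exactly the extra exploration that balancing performs relative to pure optimism.

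The hard part, and the main obstacle, is turning these single-round bounds into the claimed cumulative rate, i.e. controlling $\sum_t \|x_t\|_{V_t^{-1}}^2/\|y_t\|_{V_t^{-1}}$. The ratio $\|x_t\|_{V_t^{-1}}/\|y_t\|_{V_t^{-1}}$ cannot be bounded pointwise (an adversarial two-point $D_t$ forces it large), so I cannot simply factor out a constant. Instead, the rounds in which this ratio is large are precisely rounds in which $\|x_t\|_{V_t^{-1}}$ is large, and these are self-limiting, because each such play inflates $\det V_{t+1}=\det V_t\,(1+\|x_t\|_{V_t^{-1}}^2)$. I would therefore amortize against the log-determinant potential $\sum_t\log(1+\|x_t\|_{V_t^{-1}}^2)=\log(\det V_{T+1}/\det V_1)=\widetilde O(d)$, combining it with a Cauchy--Schwarz step and the crude bound $x_{t,*}^\top\theta_*-x_t^\top\theta_*\le 2\|\theta_*\|$ to absorb the high-uncertainty rounds. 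The accounting is arranged so that, together with $\beta_t(\delta)=\widetilde O(\sqrt d)$ and $\sum_t\|x_t\|_{V_t^{-1}}^2=\widetilde O(d)$, it produces exactly one extra factor of $\sqrt d$ over the $d\sqrt T$ of OFUL, giving the final $\widetilde O(d^{3/2}\sqrt T)$; pinning down this amortization so that the high-uncertainty rounds cost only polylogarithmically in $T$ is the step I expect to be most delicate.
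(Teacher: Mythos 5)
Your per-round analysis coincides with the paper's: the same confidence event, the same optimism argument for $b_t$, the same chain $r_t \le (b_t - x_t^\top\widehat\theta_t) + \beta_t(\delta)\|x_t\|_{V_t^{-1}}$, and the same use of the minimality of $\widehat G_{x_t,t}$ against the feasible competitor $y_t$ (whose empirical regret is $\beta_t(\delta)/\|y_t\|_{V_t^{-1}}$) to arrive at $r_t \le \beta_t(\delta)\|x_t\|_{V_t^{-1}} + \beta_t(\delta)\,\|x_t\|_{V_t^{-1}}^2/\|y_t\|_{V_t^{-1}}$; your side observation $\|y_t\|_{V_t^{-1}}\le\|x_t\|_{V_t^{-1}}$ is correct but unused in the paper. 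The genuine gap is in the one step you leave open, the cumulative control of $\sum_t \|x_t\|_{V_t^{-1}}^2/\|y_t\|_{V_t^{-1}}$, and the paper closes it with a one-line observation you missed: because $D_t\subseteq\mathbb{S}^d$ every feasible action is a unit vector, so $\operatorname{trace}(V_t)\le \lambda d + t$ gives $\lambda_{\min}(V_t^{-1})\ge 1/(\lambda d+t)$ and hence the \emph{pointwise} bound $1/\|y_t\|_{V_t^{-1}}\le\sqrt{\lambda d+t}$. Then
\begin{equation*}
\sum_{t=1}^T \beta_t(\delta)\,\frac{\|x_t\|_{V_t^{-1}}^2}{\|y_t\|_{V_t^{-1}}} \;\le\; \beta_T(\delta)\,\sqrt{\lambda d+T}\,\sum_{t=1}^T \|x_t\|_{V_t^{-1}}^2 \;\le\; \beta_T(\delta)\,\sqrt{\lambda d+T}\cdot 2d\log\bigl(1+T/(\lambda d)\bigr)
\end{equation*}
by Lemma~\ref{lemma:sumpotential}, which with $\beta_T(\delta)=\widetilde O(\sqrt d)$ is exactly the $\widetilde O(d^{3/2}\sqrt T)$ term; the first term sums to $\widetilde O(d\sqrt T)$ as you say. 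No amortization against the determinant potential beyond this, and no special treatment of high-uncertainty rounds, is needed.

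Your stated reason for abandoning a pointwise bound conflates two quantities: it is true that the \emph{ratio} $\|x_t\|_{V_t^{-1}}/\|y_t\|_{V_t^{-1}}$ admits no uniform bound, but the proof never needs the ratio — it bounds $1/\|y_t\|_{V_t^{-1}}$ alone and pairs it with the potential lemma for $\sum_t\|x_t\|_{V_t^{-1}}^2=\widetilde O(d\log T)$. Moreover the premise of your amortization scheme — that rounds with a large ratio are precisely rounds with large $\|x_t\|_{V_t^{-1}}$ — is false: the ratio blows up when $\|y_t\|_{V_t^{-1}}$ is small (the optimistic direction already well explored) while $\|x_t\|_{V_t^{-1}}$ stays moderate, e.g.\ $\|y_t\|_{V_t^{-1}}\sim t^{-1}$ and $\|x_t\|_{V_t^{-1}}\sim t^{-1/4}$; such rounds inflate $\det V_{t+1}$ only negligibly, so they are not charged by your log-determinant accounting, and the threshold-plus-crude-bound device of capping the instantaneous regret by $2\|\theta_*\|$ gives no handle on them either. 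Without some a priori lower bound on $\|y_t\|_{V_t^{-1}}$ — which here comes for free from the unit-norm actions via $\lambda_{\min}(V_t^{-1})$ — the plan as sketched does not terminate in the claimed rate, so the final summation step must be replaced by the eigenvalue argument above.
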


The regret bound in the theorem is slightly worse than the minimax optimal rate of $\widetilde O(d \sqrt{T})$, however and as we show next, regret balancing strategy can be a competitive linear bandit algorithm in practice. In Figure~\ref{fig:RM_vs_UCB}-Right, we compare regret balancing, as described above, with the OFUL algorithm~\citep{APS-2011} on a contextual linear bandit problem with two arms: for $i\in \{1,2\}$, let $\theta_i\in \real^3$ drawn uniformly at random from $[0,1]^3$ at the beginning of the experiment. In round $t$, the reward of arm $i\in \{1,2\}$ is $\theta_i^\top s_t + \xi$ where $\xi \sim N(0,1)$ and context $s_t\in \real^3$ is drawn uniformly at random from $[0,1]^3$ with $s_t[0]=1$. 

\if0
\begin{figure}[h]
\includegraphics[width=0.4\linewidth]{regret_matching_vs_ucb.jpg} 
\includegraphics[width=0.4\linewidth]{RegretMatching_LinUCB_Contextual_2.jpg} 
\caption{Regret Balancing vs UCB and OFUL. }
    \label{fig:RM_vs_UCB}
\end{figure}
\fi

\subsubsection*{Optimizing the Exploration Rate}

Next, we consider the performance of regret balancing as a bandit model selection strategy. First, consider optimizing the exploration rate in an $\epsilon$-greedy algorithm. The $\epsilon$-greedy is a simple and popular bandit method. In round $t$, the algorithm plays an action chosen uniformly at random with a small probability $\epsilon_t$, and plays the empirically best, or greedy, choice otherwise. For a well-chosen $\epsilon_t$, this simple strategy can be very competitive. The optimal value of $\epsilon_t$ however depends on the unknown reward function: It is known that the optimal value of $\epsilon_t$ is $\min\{1,\frac{5 K}{\Delta^2 t}\}$ where $\Delta$ is the smallest gap between the optimal reward and the sub-optimal rewards~\citep{LS-2020}. By this choice of exploration rate, the regret scales as $\widetilde O(\sqrt{T})$ for $K=2$ and $\widetilde O(T^{2/3})$ for $K>2$.

We apply the regret balancing strategy to find a near-optimal exploration rate. The result directly follows from Theorem~\ref{theorem:model-selection-bandit}. A similar result, but for a different algorithm, is shown by \cite{PPARZLS-2020}.
\begin{corollary}
Let $T$ be the time horizon. Let $B=\{1,2,\dots,\lfloor\log(T)\rfloor\}$. For $i\in B$, let $B_i$ be the $\epsilon$-greedy algorithm with exploration rate $\epsilon_t=2^i/t$ in round $t$. By the choice of $U(t)=t^{1/2}$ for $K=2$ (or $U(t)=t^{2/3}$ for $K>2$), the regret balancing model selection with the set of base algorithms $B$ achieves $\widetilde O(\sqrt{T})$ regret for $K=2$ (or $\widetilde O(T^{2/3})$ for $K>2$). 
\end{corollary}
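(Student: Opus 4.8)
The plan is to reduce everything to Theorem~\ref{theorem:model-selection-bandit}. Since the problem is a context-free multi-armed bandit, the single-element context space forces $\mu_{*,t}=\mu_*$ to be a time-independent constant, so the first hypothesis of Theorem~\ref{theorem:model-selection-bandit} holds automatically. It then suffices to exhibit a base algorithm $i_*\in B$ whose realized regret obeys $G_{i_*,t}\le U(\delta,S_{i_*,t})$ with high probability, where $U(\delta,S_{i,t})$ is of order $\widetilde O(t^{1/2})$ for $K=2$ (resp. $\widetilde O(t^{2/3})$ for $K>2$); the conclusion is then immediate, because $\text{Regret}_T\le M\max_i U(\delta,S_{i,T}) = \lfloor\log T\rfloor\cdot\widetilde O(\sqrt T) = \widetilde O(\sqrt T)$, the extra factor $M=\lfloor\log T\rfloor$ being absorbed into the $\widetilde O$.

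To produce such a base, I would first recall the known guarantee for $\epsilon$-greedy: with the instance-optimal rate $\epsilon_t=\min\{1,c^*/t\}$ and $c^*=5K/\Delta^2$, its regret is $\widetilde O(\sqrt T)$ for $K=2$ (resp. $\widetilde O(T^{2/3})$ for $K>2$). The base set $B$ uses the geometric grid of constants $c_i=2^i$, $i\in\{1,\dots,\lfloor\log T\rfloor\}$, which covers $[2,T]$ up to a factor of two. The next step is a grid-covering argument: because $\Delta\in(0,1]$ and $K\ge1$ force $c^*\ge 5/2>2$, the lower end of the grid is never an issue, and whenever $c^*\le T$ (equivalently $\Delta\ge\sqrt{5K/T}$) there is an index $i_0$ with $c_{i_0}\in(c^*/2,\,c^*]$. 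The remaining regime $c^*>T$ corresponds to $\Delta<\sqrt{5K/T}$, where the gaps are so small that even the near-pure-exploration base $i=\lfloor\log T\rfloor$ incurs regret at most $\Delta T<\sqrt{5KT}=O(\sqrt T)$; this boundary case I would dispatch directly.

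The crux is then to show that running $\epsilon$-greedy with a constant $c$ that is only within a factor of two of $c^*$ still preserves the target rate. I would revisit the standard $\epsilon$-greedy decomposition into an exploration term and an exploitation term. The exploration term scales linearly in $c$ (roughly $c\log T$ times the average gap), so halving or doubling $c$ changes it by at most a constant factor. The exploitation term is governed by the probability of selecting a suboptimal arm, which decays like $t^{-c\Delta^2/(2K)}$; taking $c\ge c^*/2=5K/(2\Delta^2)$ keeps this exponent at least $5/4>1$, so the exploitation regret stays summable (indeed $O(1)$) and the overall rate is unchanged up to constants. Converting the expected bound into the high-probability statement $G_{i_0,t}\le U(\delta,S_{i_0,t})$ required by Theorem~\ref{theorem:model-selection-bandit} additionally needs a concentration argument for the randomness of the exploration indicators and for the $\sigma$-sub-Gaussian noise $\sum_\tau\eta_\tau=O(\sqrt{N_{i_0,t}\log(1/\delta)})$; both contribute only $\widetilde O(\sqrt t)$ and are absorbed into $U$.

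I expect the main obstacle to be precisely this robustness analysis: making the $\epsilon$-greedy regret bound hold with high probability (not merely in expectation) and uniformly over the base's own play count up to $T$, while tracking how the constant-factor slack in $c$ propagates through both the exploration and exploitation terms across the full range of the unknown gap $\Delta$. Once a near-optimal base is certified in this way, the model-selection guarantee follows directly from Theorem~\ref{theorem:model-selection-bandit}.
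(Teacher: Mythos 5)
Your overall route is exactly the paper's: the paper gives no detailed proof, stating only that the corollary ``directly follows from Theorem~\ref{theorem:model-selection-bandit},'' and your reduction --- constant $\mu_*$ in the multi-armed setting, a grid-covering argument producing one base whose constant $c$ is within a factor of two of $c^*=5K/\Delta^2$, robustness of the $\epsilon$-greedy guarantee to this slack, a high-probability conversion accounting for the $\sigma$-sub-Gaussian noise, and absorption of the factor $M=\lfloor\log T\rfloor$ into $\widetilde O$ --- is the natural fleshing-out of that one-line argument. The parts you flag as the main technical work (an anytime, high-probability version of the $\epsilon$-greedy bound over the base's own play count, as required for $G_{i_*,t}\le U(\delta,S_{i_*,t})$ for all $t$) are genuinely needed by Theorem~\ref{theorem:model-selection-bandit} and are glossed over by the paper as well.

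There is, however, one concrete error: your dispatch of the boundary regime $c^*>T$ (i.e.\ $\Delta<\sqrt{5K/T}$) fails for $K>2$. You bound the regret of the near-pure-exploration base $i=\lfloor\log T\rfloor$ by $\Delta T$, but $\Delta$ is only the \emph{minimum} gap; that base spreads its (essentially all-exploration) pulls uniformly over all arms, so its regret is governed by the average gap $\bar\Delta=\frac{1}{K}\sum_a \Delta_a$, which can be $\Theta(1)$ even when $\Delta$ is tiny (e.g.\ $K=3$ with gaps $\Delta$ and $1$), giving regret $\Theta(T)$ rather than $O(\sqrt{5KT})$. The claim is fine for $K=2$, where the unique suboptimal arm has gap exactly $\Delta$. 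For $K>2$ the regime is repairable by a per-arm decomposition: choose instead the grid point with $c_{i_0}$ of order $T^{2/3}$; its exploration cost is $\widetilde O(c_{i_0})=\widetilde O(T^{2/3})$ for any gap configuration, arms with $\Delta_a$ of order at least $\sqrt{K}\,T^{-1/3}$ contribute negligibly to the exploitation term because the decay exponent $c_{i_0}\Delta_a^2/(5K)$ exceeds $1$, and arms with smaller gaps contribute at most $\Delta_a T = O(\sqrt{K}\,T^{2/3})$ each. (This also shows that for $K>2$ no gap-dependent grid selection is needed at all, which simplifies your covering step to the $K=2$ case only.) With that correction your argument matches the corollary as stated.
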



Next, we evaluate the performance of regret balancing in finding a near optimal exploration rate. Consider a bandit problem with two Bernoulli arms with means $\{0.5,0.45\}$. Consider 18 $\epsilon$-greedy base algorithms with exploration $\epsilon_t = c/t$, where values of $c$ are on a geometric grid in $[1,2T]$. Apply regret balancing with the target regret bound $U(t) = \sqrt{t}$, and the set of $\epsilon$-greedy base algorithms. The experiment is repeated 20 times. Figure~\ref{fig:bandit_model_selection}-Left shows the performance of regret balancing strategy. 

\begin{figure}[h]
\includegraphics[width=0.3\linewidth]{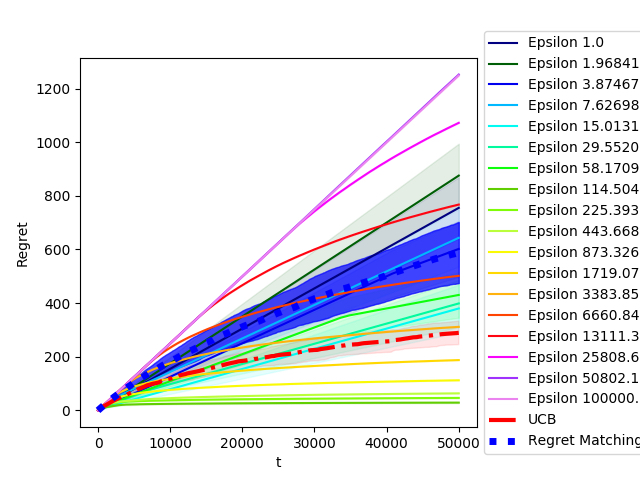} 
\includegraphics[width=0.3\linewidth]{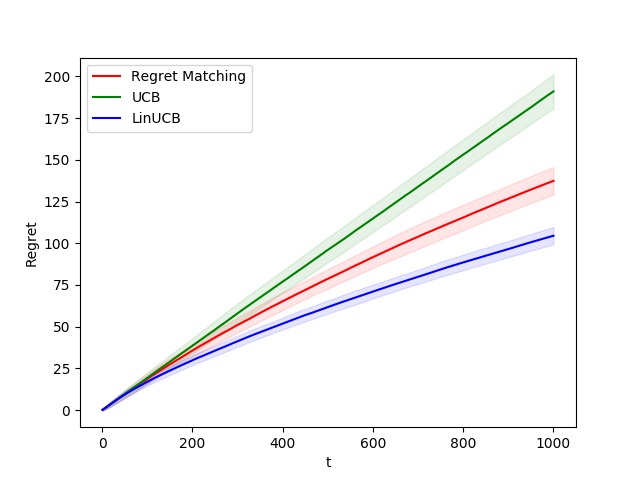} 
\includegraphics[width=0.3\linewidth]{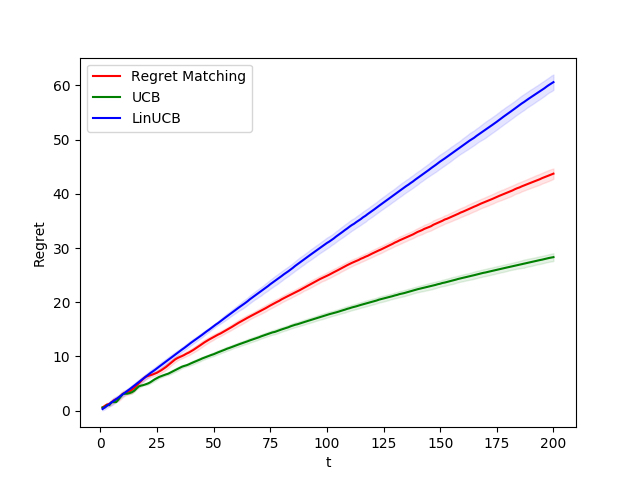}
\caption{Left: Optimising the exploration rate with regret balancing (Mean and standard deviation of 20 runs), Middle and Right: Regret balancing to choose between UCB and LinUCB (Mean and standard deviation of 500 and 200 runs).}
    \label{fig:bandit_model_selection}
\end{figure}

\subsubsection*{Representation Learning}

The sublinear regret bounds of linear bandit algorithms are valid as long as the reward function is truly a linear function of the input feature representation. Assume it is known that the reward function is linear in one of the $M$ feature maps $\{\phi_i:D_t \rightarrow\real^d\,:\, i\in [M]\}$, but the identity of the true feature map is unknown. 
By applying Theorem~\ref{theorem:model-selection-bandit} to $M$ OFUL algorithms, each using one of the feature maps, we obtain a regret that scales as $\widetilde O(M d\sqrt{T})$.

As an application, we consider the problem of choosing between UCB and OFUL. Contexts are drawn from the standard normal distribution, but the first element in the context vector is always 1. The noise is $\xi \sim N(0,\sigma^2=0.1)$. First, consider a problem with $K=2$ arms, each having a reward vector in $\real^{10}$ drawn uniformly at random from $[0,1/3]^{10}$ at the beginning. We use regret balancing with target function $U(t) = \sqrt{2 t}$ to perform model selection between UCB and OFUL. 
Results are shown in Figure~\ref{fig:bandit_model_selection}-Middle. In this experiment, OFUL performs better than UCB, and performance of regret balancing is  in between. Next we consider a problem with $K=5$ arms. Mean reward of arm $i\in [K]$, denoted by $\mu_i$, is generated uniformly at random from $[0,1]$ at the beginning. In each round, we observe a context $s_t \in \real^{10}$, but the expected reward of arm $i$ in each round is $\mu_i$. 
We use target regret $U(t) = \sqrt{5t}$. Figure~\ref{fig:bandit_model_selection}-Right shows that in this setting UCB performs better than OFUL, and performance of regret balancing is again in between.

\subsubsection*{Choosing Among Reinforcement Learning Algorithms}

We consider the model selection problem in finite-horizon reinforcement learning problems. The ideas can be easily extended to average-reward setting as well, but we choose a finite-horizon setting to simplify the presentation.

A finite-horizon reinforcement learning problem is specified by a horizon $H$, a state space $S$ that is partitioned into $H$ disjoint sets, an action space $A$, a transition dynamics $P$ that maps a state-action pair to a distribution over the states in the next stage, and a reward function $r$ that assigns a scalar value to each state-action pair. The objective is to find a policy $\pi$, that is a mapping from states to distributions on actions, that maximizes the total reward. 

\begin{wrapfigure}{}{0.5\textwidth}
\includegraphics[width = \linewidth]{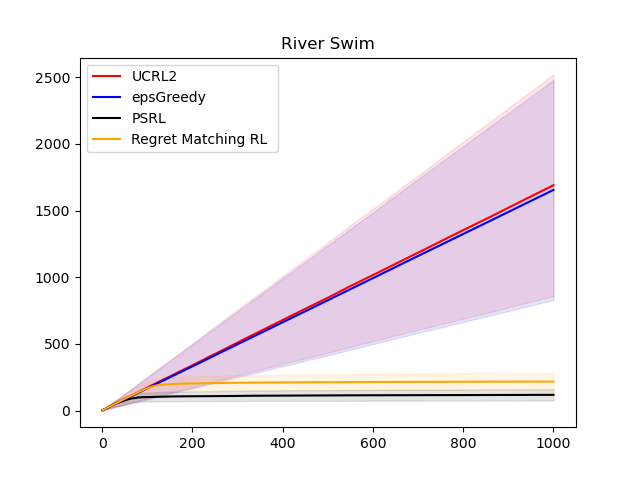} 
\caption{Regret balancing for model selection among $\epsilon$-greedy, UCRL, and PSRL. Mean and standard deviation of 10 runs.}
    \label{fig:rl_exp}
\end{wrapfigure}

The model selection problem is defined next. In episode $t$, the learner chooses base $i_t$ from a set of $M$ RL algorithms, the base is executed for $H$ rounds, and the rewards of the actions are revealed to the learner. Let $V_{*,t}$ be the total reward of the optimal policy in the underlying reinforcement learning problem. Quantities $N_{i,t}$, $R_{i,t}$, $S_{i,t}$, $i_*$, $U$, etc are defined similar to the bandit case. For example, $N_{i,t}$ is the number of episodes that base $i$ is played up to episode $t$. 
The regret balancing strategy is defined next. In episode $t$, let $j_t = \arg\max_{i\in [M]}\, \frac{R_{i,t}}{N_{i,t}} + \frac{U(\delta, S_{i,t})}{N_{i,t}}$ be the optimistic base. 
Let $b_t$ such that $N_{j_t,t} b_t - R_{j_t,t} = U(\delta, S_{j_t,t})$. Define the empirical regret of base $i$ by $\widehat G_{i,t} = N_{i,t} b_t - R_{i,t}$. 
In episode $t$, we choose the base with the smallest empirical regret: $i_t = \arg\min_i\, \widehat G_{i,t}$. The next theorem shows the model selection guarantee for the regret balancing strategy. The analysis is almost identical to the analysis of the bandit model selection in the previous section.
\begin{theorem}
\label{theorem:model-selection-rl}
If $V_{*,t}=V_*$ for a constant $V_*$ regardless of round $t$, and if for any $\delta\in (0,1)$ with probability at least $1-\delta$, $G_{i_*,t} \le U(\delta, S_{i_*,t})$ for any $t$, then $\text{Regret}_T \le M \max_i U(\delta, S_{i,T})$ with probability at least $1-\delta$.
\end{theorem}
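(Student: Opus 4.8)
The plan is to follow the proof of Theorem~\ref{theorem:model-selection-bandit} almost verbatim, replacing the constant optimal per-round reward $\mu_*$ by the constant optimal per-episode value $V_*$ and reading each ``round'' as an episode. Throughout, I would condition on the probability-$(1-\delta)$ event on which the regret guarantee $G_{i_*,t}\le U(\delta,S_{i_*,t})$ holds for all $t$; every inequality below is asserted on this event, so the final bound also holds with probability at least $1-\delta$. The argument decomposes into the same three steps: (i) show that $b_t$ is an optimistic estimate of $V_*$; (ii) bound the regret of the base selected in episode $t$ by $U(\delta,S_{j_t,t})$; and (iii) sum these per-base bounds across the $M$ bases.

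For step (i), I would use the definition of $j_t$ as the maximizer of $R_{i,t}/N_{i,t}+U(\delta,S_{i,t})/N_{i,t}$ together with the optimal-base guarantee to write
\[
b_t \;=\; \frac{R_{j_t,t}}{N_{j_t,t}}+\frac{U(\delta,S_{j_t,t})}{N_{j_t,t}} \;\ge\; \frac{R_{*,t}}{N_{*,t}}+\frac{U(\delta,S_{*,t})}{N_{*,t}} \;\ge\; \frac{\sum_{\tau\in N_{*,t}}V_{*,\tau}}{N_{*,t}} \;=\; V_* \;,
\]
where the second inequality is the regret guarantee $R_{*,t}+U(\delta,S_{*,t})\ge \sum_{\tau\in N_{*,t}}V_{*,\tau}$, and the final equality is exactly where the hypothesis $V_{*,t}=V_*$ enters: it replaces the telescoped optimal values by $N_{*,t}V_*$. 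Step (ii) then mirrors \eqref{eq:one_base_regret2}: since $V_{*,t}=V_*$,
\[
G_{i_t,t} \;=\; N_{i_t,t}V_* - R_{i_t,t} \;\le\; N_{i_t,t}b_t - R_{i_t,t} \;=\; \widehat G_{i_t,t} \;\le\; \widehat G_{j_t,t} \;=\; N_{j_t,t}b_t - R_{j_t,t} \;=\; U(\delta,S_{j_t,t}) \;,
\]
using optimism, the definition $i_t=\arg\min_i\widehat G_{i,t}$, and the defining identity $N_{j_t,t}b_t-R_{j_t,t}=U(\delta,S_{j_t,t})$.

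For step (iii), let $T_i$ be the last episode in which base $i$ is played; since $i$ is never selected afterwards its regret is frozen after $T_i$, so $\sum_{i=1}^M G_{i,T}=\sum_{i=1}^M G_{i,T_i}\le \sum_{i=1}^M U(\delta,S_{j_{T_i},T_i})\le M\max_i U(\delta,S_{i,T})$, which gives $\text{Regret}_T\le M\max_i U(\delta,S_{i,T})$. I expect the only genuinely new point, relative to the bandit proof, to be bookkeeping around the episodic structure: the per-episode regret is now bounded by $V_*$ (intuitively $O(H)$) rather than by $1$, so accounting for the single final-play episode of each base picks up an additive $M V_*$ term. This is lower order than $M\max_i U(\delta,S_{i,T})$ (since $U=\Omega(\sqrt{T})\ge V_*$) and is absorbed into the stated bound, exactly as the ``instantaneous regret upper bounded by $1$'' remark absorbs it in the bandit case. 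Beyond this, the constant-value hypothesis $V_{*,t}=V_*$ plays precisely the role that $\mu_{*,t}=\mu_*$ played for bandits --- it is what turns $b_t\ge V_*$ into $b_t\ge \sum_{\tau\in N_{i,t}}V_{*,\tau}/N_{i,t}$ for every base --- so no substantively new obstacle arises.
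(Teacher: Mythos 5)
Your proof is correct and is exactly the argument the paper intends: the paper gives no separate proof for Theorem~\ref{theorem:model-selection-rl}, stating only that the analysis is almost identical to that of Theorem~\ref{theorem:model-selection-bandit}, and your episode-by-episode transcription (optimism $b_t \ge V_*$, the chain $G_{i_t,t}\le \widehat G_{i_t,t}\le \widehat G_{j_t,t}=U(\delta,S_{j_t,t})$, and summation over the last-played episodes $T_i$) is precisely that adaptation. Your remark that the final-play episode of each base contributes an absorbed lower-order term (bounded by $V_*$ rather than $1$) matches the role of the paper's ``instantaneous regret is upper bounded by $1$'' observation in the bandit case.
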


In Figure~\ref{fig:rl_exp}, we perform model selection with base algorithms UCRL2~\citep{JOA-2010}, a Q-learning method with $\epsilon$-greedy exploration and $\epsilon=0.1$, and PSRL~\citep{OVR-2013} in the River Swim domain~\citep{strehl2008analysis}. Regret balancing adapts to the best performing strategy (PSRL in this case). 
\if0
\begin{figure}[h]
\includegraphics[width=0.5\linewidth]{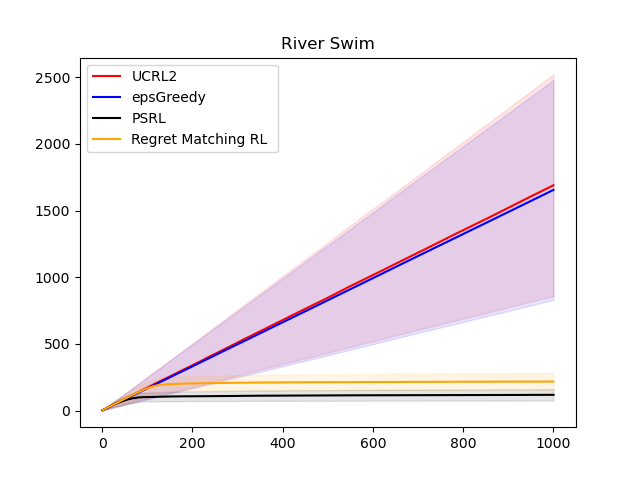} 
\caption{Regret balancing for model selection among $\epsilon$-greedy, UCRL, and PSRL. }
\label{fig:rl_exp}
\end{figure}
\fi

As another application, consider the problem of choosing state representation in reinforcement learning. Many existing theoretical results hold under the assumption that a correct state representation (or feature map) is given. As examples, \cite{ABBLSW-2019} show sublinear regret bounds under the assumption that the value function of any policy is linear in a given feature vector, while \cite{JYWJ-2019} show sublinear regret bounds for linear MDPs, i.e. when the transition dynamics and the reward function are known to be linear in a given feature vector. Given $M$ candidate feature maps, one of which is fully aligned with the true dynamics of the MDP, we can apply the regret balancing strategy and by Theorem~\ref{theorem:model-selection-rl}, the performance will be optimal up to a factor of $M$. 
\begin{corollary}
\label{cor:RL}
Let $\mathcal{M}= (S, A, H, P, r)$ be a linear MDP parametrized by an unknown feature map $\{ \Phi^*: S \times A\rightarrow \real^d\}$.  Let $F=\{\Phi_i(s,a)\}_{i=1}^M$ be a family of feature maps with $\Phi_i(s,a) \in \mathbb{R}^d$ and  satisfying $\Phi^* \in F$. For regret balancing with target $U(t) = d^{3/2}H^{3/2}T^{1/2}$ and with a class of LSVI-UCB base algorithms~\citep{JYWJ-2019}, each instantiated with a feature map in $F$, the regret is bounded as $\text{Regret}_T \leq \tilde{\mathcal{O}}\left(M\sqrt{  d^3 H^3 T  } \right)$.
\end{corollary}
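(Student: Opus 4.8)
The plan is to obtain the result as a direct instantiation of Theorem~\ref{theorem:model-selection-rl}, so the real work reduces to verifying its two hypotheses for the representation-selection problem and then reading off the resulting bound. I would treat each of the $M$ feature maps $\Phi_i$ as the fixed internal representation of one base algorithm, so that base $i$ is exactly LSVI-UCB run with $\Phi_i$, and I would take the target regret to be the given $U(\delta, S_{i,t}) = \tilde{\mathcal{O}}(d^{3/2} H^{3/2} t^{1/2})$, with $t$ the number of episodes routed to that base.

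First I would check the constant-value condition. Since $\mathcal{M} = (S,A,H,P,r)$ is a fixed finite-horizon MDP with a fixed initial-state distribution, the value of the optimal policy $V_{*,t}$ equals the same scalar $V_*$ in every episode, independent of $t$; hence the condition $V_{*,t}=V_*$ required by Theorem~\ref{theorem:model-selection-rl} holds automatically.

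The substantive step, and the one I expect to be the main obstacle, is verifying the high-probability bound $G_{i_*,t}\le U(\delta, S_{i_*,t})$. Here I would invoke the assumption $\Phi^*\in F$: the base instantiated with the true feature map is a correctly specified linear-MDP learner, so the guarantee of \cite{JYWJ-2019} applies to it. The care needed is that the LSVI-UCB bound is stated for a learner run in isolation over a block of consecutive episodes, whereas inside the model-selection loop this base is executed only on the subset $N_{\Phi^*,t}$ of episodes on which it is selected. I would argue that this causes no difficulty: the environment is stationary, the base updates its least-squares estimates and confidence sets using only its own observed transitions, and the LSVI-UCB analysis is anytime, so the realized regret of the $\Phi^*$-base over any number $n$ of episodes it has played is bounded by $\tilde{\mathcal{O}}(\sqrt{d^3 H^3 \cdot nH}) = \tilde{\mathcal{O}}(d^{3/2}H^{3/2}\sqrt{nH})$, which matches the target $U(\delta, S_{\Phi^*,t})$ under the chosen scaling. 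Since the $\Phi^*$-base thus admits the valid upper bound $U$, and the optimal base $i_*$ is by definition the base of smallest regret, the hypothesis $G_{i_*,t}\le U(\delta,S_{i_*,t})$ is satisfied with this same $U$.

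With both hypotheses in place, Theorem~\ref{theorem:model-selection-rl} yields, with probability at least $1-\delta$, that $\text{Regret}_T \le M \max_i U(\delta, S_{i,T}) = M\,\tilde{\mathcal{O}}(d^{3/2}H^{3/2}T^{1/2}) = \tilde{\mathcal{O}}\!\left(M\sqrt{d^3 H^3 T}\right)$, which is the claimed bound. The only remaining bookkeeping is to confirm that the polylogarithmic and failure-probability factors hidden in $U$ and in the LSVI-UCB guarantee are absorbed into the $\tilde{\mathcal{O}}$ notation, and that the per-base counts $N_{i,T}$ sum to the total number of episodes so that evaluating $U$ at the horizon is legitimate.
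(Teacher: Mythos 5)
Your proposal is correct and follows exactly the route the paper intends: the corollary is a direct instantiation of Theorem~\ref{theorem:model-selection-rl}, with $V_{*,t}=V_*$ holding because the MDP and initial distribution are fixed, and the hypothesis $G_{i_*,t}\le U(\delta,S_{i_*,t})$ supplied by the LSVI-UCB guarantee of \cite{JYWJ-2019} applied to the base running the true feature map $\Phi^*\in F$, yielding $\text{Regret}_T \le M\max_i U(\delta,S_{i,T}) = \tilde{\mathcal{O}}\bigl(M\sqrt{d^3H^3T}\bigr)$. One small repair: your closing inference that $i_*$ inherits the bound \emph{because} it has smallest regret does not actually follow (minimal cumulative regret when run alone does not imply the anytime high-probability bound), but this is harmless since the theorem's proof only uses the existence of one base satisfying the guarantee, and you can simply take that base to be the well-specified $\Phi^*$-instance.
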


\cite{MRM-2011, MNOR-2013, OMR-2014, OPLFM-2019} study a closely related but different problem where $M$ state representation functions are given and with at least one such function, the resulting state evolution is Markovian. 


\section{Lower Bounds}

\subsection{Regret Balancing}

In this section we show that for any model selection algorithm there are problem instances where the algorithm must do regret balancing. For simplicity we restrict ourselves to the case $M=2$, and to a simple class of problem instances, although it is possible to extend the argument to richer families and beyond two base algorithms.

Let $\mathcal{M}$ be a model selection algorithm with expected regret $\mathcal{R}(t)$ up to time $t$. We say an algorithm ``model selects" w.r.t. a class of algorithms $\mathcal{B}$ if for any two base algorithms $A, B \in \mathcal{B}$ with expected regret $\mathcal{R}_A$ and $\mathcal{R}_B$, there exists $T_0>0$ such that for all $T \geq T_0$, $\mathcal{R}(T) \leq  \mathcal{O}( \min( \mathcal{R}_A(T), \mathcal{R}_B(T)) )$. We say that algorithm $\mathcal{M}$ is regret balancing for base algorithms $(A, B)$ if for all $\delta \in (0,1)$ there exists $T(\delta)$ such that for all $T \geq T(\delta)$, with probability at least $1-\delta$, 
\begin{equation}
\label{equation::regret_matching_condition}
\log\left( \max\left( \frac{\tilde{\mathcal{R}}_A(T)}{\tilde{\mathcal{R}}_B(T)} , \frac{\tilde{\mathcal{R}}_B(T)}{\tilde{\mathcal{R}}_A(T)} \right) \right) \leq o(\log(T)) \,, 
\end{equation}
where $\tilde{\mathcal{R}}_A(T)$ and $\tilde{\mathcal{R}}_B(T)$ are the empirical regrets of algorithms $A$ and $B$, respectively. 
The main result of this section is to show there exist problem and algorithm classes such that any model selection strategy must be regret balancing. 
\begin{theorem}
\label{thm:lowerbound}
There exists two algorithm classes $\mathcal{B}_1,\mathcal{B}_2$ with $\mathcal{B}_1 \subseteq \mathcal{B}_2$ such that any model selection strategy $\mathcal{M}$ for class $\mathcal{B}_2$ must satisfy the condition in \eqref{equation::regret_matching_condition} for all $\mathcal{A}, B \in \mathcal{B}_1$ whose regrets are distinct.
\end{theorem}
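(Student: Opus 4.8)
The plan is to prove the contrapositive: I will exhibit classes $\mathcal{B}_1 \subseteq \mathcal{B}_2$ such that whenever a model selection strategy $\mathcal{M}$ \emph{violates} the balancing condition \eqref{equation::regret_matching_condition} for some distinct pair $A,B \in \mathcal{B}_1$, it cannot model-select with respect to $\mathcal{B}_2$. For the construction I would work in a two-armed stochastic bandit with a fixed optimal arm and take $\mathcal{B}_1$ to be a family of \emph{fixed-arm} base algorithms, where base $A$ deterministically pulls a suboptimal arm of gap $\Delta_A$. Then $\mathcal{R}_A(T)=T\Delta_A$, two bases have distinct regret exactly when their gaps differ, and when $\mathcal{M}$ plays $A$ for $N_{A,T}$ rounds the realized empirical regret $\tilde{\mathcal{R}}_A(T)$ concentrates on $N_{A,T}\Delta_A$. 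I form $\mathcal{B}_2$ by adding, for each base $A\in\mathcal{B}_1$ and each threshold $K$, a \emph{hybrid} $C_{A,K}$ that pulls $A$'s suboptimal arm for its first $K$ pulls (thus producing an identical reward distribution to $A$ over those pulls) and switches to the optimal arm afterwards; hence $\mathcal{R}_{C_{A,K}}(T)=K\Delta_A$ for $T\ge K$, which is sublinear once $K=o(T)$.

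Suppose now that $\mathcal{M}$ model-selects w.r.t. $\mathcal{B}_2$ yet fails \eqref{equation::regret_matching_condition} for a pair $A,B\in\mathcal{B}_1$ with $\Delta_A\neq\Delta_B$. Negating the balancing definition yields a constant $\delta_0>0$ and arbitrarily large horizons $T$ for which, on an event $E$ with $\mathbb{P}(E)\ge\delta_0$, the empirical regrets are polynomially separated, say $\tilde{\mathcal{R}}_A(T)\ge T^{c}\,\tilde{\mathcal{R}}_B(T)$ for some $c>0$ (the reverse inequality is symmetric). Since $N_{A,T}+N_{B,T}=T$ and $\tilde{\mathcal{R}}_B(T)\approx N_{B,T}\Delta_B$, the separation forces the under-played base to be pulled rarely: on $E$ one gets $N_{B,T}\le K_0:=O\!\left(T^{1-c}\Delta_A/\Delta_B\right)$, while $N_{A,T}=T-N_{B,T}=\Theta(T)$ and therefore $\tilde{\mathcal{R}}_A(T)=\Theta(T\Delta_A)$ on $E$.

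I then replace $B$ by the hybrid $C:=C_{B,K}$ with $K:=K_0$ and run $\mathcal{M}$ on the pair $(A,C)$ in the same bandit, coupling the two runs by feeding $\mathcal{M}$ identical internal randomness and identical reward streams to the common slots. Because $C$ is statistically identical to $B$ as long as its slot is pulled at most $K$ times, an inductive coupling over the history shows that on $E$ the trajectory of $\mathcal{M}$ on $(A,C)$ coincides with its trajectory on $(A,B)$ (the slot is pulled $N_{B,T}\le K$ times, so the switch never fires). The realized regret on $(A,C)$ then equals that on $(A,B)$, namely $\Theta(T\Delta_A)$ on $E$, so $\mathcal{R}(T)\ge \delta_0\cdot\Theta(T\Delta_A)$. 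But model selection w.r.t. $\mathcal{B}_2\ni C$ requires $\mathcal{R}(T)\le O(\min(\mathcal{R}_A(T),\mathcal{R}_C(T)))=O(K\Delta_B)=O(T^{1-c}\Delta_A)$, which is violated for large $T$ — the desired contradiction.

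The step I expect to be the main obstacle is the interaction between the hybrid's switch time $K\asymp T^{1-c}$ and the horizon-dependent threshold $T_0$ implicit in the phrase ``model selects.'' I would resolve it by placing a countable family of hybrids (e.g.\ dyadic thresholds) in $\mathcal{B}_2$ and, along the subsequence of horizons where balancing fails, selecting the fixed hybrid whose switch time matches the observed under-play level, so that the model-selection guarantee genuinely applies to $(A,C)$ at the chosen horizon. Formalizing ``indistinguishable until the slot is over-pulled'' as a stopping-time coupling, and checking that $E$ is measurable with respect to the shared history, is the other delicate point; everything else reduces to the concentration $\tilde{\mathcal{R}}_A(T)\approx N_{A,T}\Delta_A$ and elementary arithmetic.
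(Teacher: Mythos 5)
Your proposal is correct in its essentials and follows the same skeleton as the paper's proof of Theorem~\ref{thm:lowerbound}: argue by contradiction from a polynomial separation of empirical regrets on a constant-probability event, deduce that the under-played base is selected at most $O(T^{1-c})$ times, place in $\mathcal{B}_2$ a hybrid that mimics the under-played base up to that threshold and then acts optimally, and use indistinguishability of the two worlds on that event to conclude that $\mathcal{M}$'s expected regret $\Omega(\delta_0\, T\Delta_A)$ exceeds the hybrid's standalone regret $O(T^{1-c}\Delta_A)$, contradicting the model-selection guarantee. The differences are in instantiation. The paper takes $\mathcal{B}_1$ to be algorithms with \emph{deterministic} instantaneous regret $T^{a-1}$, so empirical regrets are exactly $N_A T^{a-1}$ and $N_B T^{b-1}$; this removes all concentration arguments, and it also lets the paper dispose of the case $a>b$ (over-separated base has the larger standalone rate) directly, with no hybrid — the hybrid is needed only when $b>a$. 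Your stochastic fixed-gap bases all have linear regret, so your two directions are genuinely symmetric and both need the hybrid, but you must intersect the bad event with a concentration event; in particular, for small $N_B$ the realized $\tilde{\mathcal{R}}_B$ can be noise-dominated (even negative), so the deduction $N_B = O(T^{1-c})$ requires the self-bounding step $N_B\Delta_B \le \tilde{\mathcal{R}}_B + O(\sqrt{N_B})$, which you only gesture at. Two further points: (i) in a two-armed bandit there is a single suboptimal gap, so a family of fixed-arm bases with distinct $\Delta$'s needs one suboptimal arm per base — a cosmetic fix; (ii) the horizon-dependence of the hybrid's switch time $K_0 \asymp T^{1-c}$ is a genuine quantifier subtlety that you rightly flag, but your dyadic-family repair does not by itself close it: since $K_0(T)\to\infty$ along the bad subsequence, no single \emph{fixed} hybrid witnesses failure at arbitrarily large $T$, which is what the pairwise ``there exists $T_0$ such that for all $T\ge T_0$'' definition demands; one needs the guarantee to hold with constants and $T_0$ uniform over the class, or a corresponding re-quantification. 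The paper's own proof has exactly the same looseness (its $t'$ also depends on $T$), so on this point you are at parity with, and more candid than, the published argument.
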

\begin{proof}[Sketch] The complete proof is in Appendix~\ref{subsec:proof-regret-matching-lower}. The proof proceeds by contradiction. We consider a pair of simple deterministic algorithm classes. Suppose there exist two algorithms $A, B \in \mathcal{B}_1$ such that $\mathcal{M}$ does not regret balance them. In this case for infinitely many $T > T(\delta)$ and with probability at least $\delta$ for each such $T$, w.l.o.g $A$'s regret must be larger than that of $B$ by a factor of $T^\beta$ for some $\beta > 0$. We now construct another algorithm $C \in \mathcal{B}_2$ that acts just like $B$ until the moment $\mathcal{M}$ stops pulling it (in the $\delta$ probability event) and then acts optimally. Algorithm $C$ has better regret than $A$. It can be shown that in this $\delta$-probability event, $\mathcal{M}$ will be unable to detect if it's playing $B$ or $C$, thus incurring in a large regret.
\end{proof}

\subsection{The Knowledge of the Optimal Base Regret}

We show that a prior knowledge of the optimal base regret is needed to achieve the optimal regret. 
\begin{theorem}
\label{theorem:lower_bound_optimal_regret}
There is a model selection problem such that if the learner does not know the regret of best base, and does not have access to the arms, then its regret is larger than that of the optimal base.
\end{theorem}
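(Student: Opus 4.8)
The plan is to construct a single model selection problem --- a class of two environments sharing the same two base algorithms $A$ and $B$ --- on which no learner that lacks both the regret bound $U$ and direct arm access can match the optimal base regret. I would fix a horizon $T$, a secret ``warm-up'' length $\tau$ with $1 \le \tau < T/2$, and set $\mu_* = 1$ in both environments. Base $A$ deterministically returns reward $1/2$ in every round of both environments, so that it reveals nothing that distinguishes them. Base $B$ is the discriminating object: in environment $I_1$ it returns $0$ on each of its first $\tau$ plays and $1$ thereafter (a base that ``wakes up''), while in environment $I_2$ it returns $0$ on every play. A direct computation then gives the base regrets $C_{A,T} = T/2$ in both environments, whereas $C_{B,T} = \tau$ in $I_1$ and $C_{B,T} = T$ in $I_2$. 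Hence the optimal base is $B$ with regret $\tau$ in $I_1$, and $A$ with regret $T/2$ in $I_2$.

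The crux is an indistinguishability argument. Since the learner only observes the rewards of the base it plays (no arm access) and does not know $U$ (hence cannot calibrate how long $B$'s poor rewards may be tolerated), the two environments produce identical observation sequences as long as $B$ has been selected at most $\tau$ times: $A$'s rewards coincide, and $B$'s first $\tau$ rewards are $0$ in both. Consequently, coupling the learner's internal randomness across the two environments, its sequence of base choices is identical in $I_1$ and $I_2$ up to the round at which it plays $B$ for the $(\tau+1)$-th time (if ever). I would let $n$ denote the number of times $B$ is played in the execution on $I_2$ (the all-zeros trajectory) and split on whether the learner over-explores $B$.

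The two cases close the argument. If $n \le \tau$, the $I_1$ execution coincides with the $I_2$ execution, so on $I_1$ the learner plays $B$ exactly $n \le \tau$ times --- all within the warm-up, all yielding $0$ --- and plays $A$ on the remaining $T-n$ rounds; its regret is therefore $n + (T-n)/2 \ge T/2$, which strictly exceeds the optimal base regret $\tau < T/2$, indeed by a factor $\Theta(T/\tau)$ that grows with $T$ whenever $\tau = o(T)$. If instead $n > \tau$, then on $I_2$ the learner plays the always-$0$ base $B$ more than $\tau \ge 1$ times, so its regret equals $T/2 + n/2 > T/2$, strictly exceeding the optimal base regret $T/2$. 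In either case the learner's regret is larger than that of the optimal base on at least one environment of the class, which is the claim.

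The main obstacle I anticipate is making the indistinguishability fully rigorous for arbitrary randomized, history-dependent learners: the clean statement that the action sequences agree until the $(\tau+1)$-th play of $B$ requires a coupling of the learner's randomness across $I_1$ and $I_2$ together with a check that the event $\{n > \tau\}$ is measurable with respect to the shared trajectory, so that the case split is well defined in both environments simultaneously. A secondary point to argue carefully is that both hypotheses are genuinely used: without $U$ the learner cannot know that $\tau$ rounds of zero reward from $B$ lie within the optimal base's budget, and without arm access it cannot directly sample the optimal arm to discover $\mu_* = 1$ and run the true-regret balancing underlying Theorem~\ref{theorem:model-selection-bandit}. I would remark that granting either resource would let the learner tolerate exactly the right amount of exploration on $B$ and recover near-optimal regret, confirming that the impossibility is attributable precisely to the absence of both.
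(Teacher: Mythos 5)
Your construction has a genuine gap in the over-exploration branch, and it is fatal to the argument. In the case $n > \tau$ you bound the learner's regret on $I_2$ by $T/2 + n/2$ against the optimal base regret $T/2$ --- an excess factor of at most $2$, i.e., only a constant. Since the case split is over the learner's own behavior, a learner is free to choose to live in that branch: consider the strategy ``play $B$ for the first $\lceil T/2\rceil$ rounds, then commit to the empirically better base.'' On $I_1$ it discovers $B$'s wake-up and incurs regret exactly $\tau = C_{B,T}$; on $I_2$ it incurs regret $3T/4 \le \tfrac{3}{2}\,C_{A,T}$. This learner uses neither a regret bound $U$ nor arm access, yet is within a constant factor of the optimal base on \emph{every} instance of your class, so your class cannot witness the theorem. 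The literal reading ``strictly larger regret'' does not rescue the proposal, because under that reading the statement is vacuous (even the paper's own regret-balancing strategy with full knowledge of $U$ has regret $M\max_i U$, strictly larger than the optimal base); the intended content, as the paper's own proof delivers, is a separation by a factor polynomial in $T$ (e.g.\ $T^{\epsilon}$), ruling out any $O(C_{i_*,T})$ guarantee.

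The structural reason your construction cannot be repaired in place is that your fallback base $A$ has \emph{linear} standalone regret $T/2$, so over-exploring the bad base in $I_2$ can never cost more than a constant factor relative to the optimum. The paper's proof is built precisely to avoid this: both bases are fixed mixture policies over three arms with \emph{sublinear} instantaneous regrets $T^{x-1}$ and $T^{y-1}$ ($x<y$), so in $\mathcal{E}_1$ over-exploring $B_2$ (more than $\sim T^{x-y+1+\epsilon}$ pulls) already costs a factor $T^{\epsilon}$ against the optimal base regret $T^{x}$. Under-exploration is then punished in a second environment $\mathcal{E}_2$ whose optimal arm reward is perturbed to $1+\Delta$, together with a modified base $B_2'$ that mimics $B_2$ exactly until the time the learner has provably stopped sampling it and \emph{then} switches to the optimal arm $a_1$: the learner cannot distinguish $B_2$ from $B_2'$, incurs regret $\Omega(T^{(x+y)/2})$, while $B_2'$ alone has regret $O(T^{(3x+y)/4}) = o(T^{(x+y)/2})$. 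Your indistinguishability-by-coupling step is sound and mirrors the paper's, but without an analogue of the ``wake-up into a perturbed-optimum environment'' device --- which is exactly where the unknown optimal arm reward enters --- the dichotomy collapses to a constant-factor loss on one side.
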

\begin{proof}[Sketch]
The complete proof is in Appendix~\ref{app:lower_bounds}. Let there be two base algorithms, and let $R_1$ and $R_2$ be their regrets incurred when called by the model selection strategy. If $R_1 =o(R_2)$, we can construct the bases such that they both have zero regret after the learner stops selecting them. Therefore their regrets when running alone are $R_1$ and $R_2$, and the learner has regret of the same order as $R_2$, which is higher than the regret of the better base running alone ($R_1$). If however $R_1 \approx R_2$, since the learner does not know the optimal arm reward, we can create another environment where the optimal arm reward is different, so that in the new environment the regrets are no longer equal.
\end{proof}



\if0
In Theorem~\ref{theorem:lower_bound_optimal_regret}, the master algorithm has low regret in the original environment. But because the master does not know the best arm reward, it cannot run in $\mathcal{E}_2$. In this section we present a master algorithm that estimate the best arm reward, and therefore it can achieve the same regret as the best base without having to know the regret of the best base. With some probability, the master forced explores all arms to estimate the best arm reward. With this knowledge it can do regret matching without knowing the optimal regret. 
\begin{lemma}
\label{lemma:unknown_regret}
Let the best base be $k^*$ and its regret when running alone is denoted $G_{k^*, T}$. We will show a master with regret $O \left(\max \left( T^{2/3}\mathcal{M}^{2/3}K^{1/3}, \mathcal{M}\cdot G_{k^*,T}  \right) \right)$ without knowing $G_{k^*, T}$. 
\end{lemma}
The proof is in Appendix~\ref{app:unknown_regret}. 
\fi

\if0
\subsection{Conclusion} 
Theorem~\ref{thm:lowerbound} shows that a master algorithm must do regret matching in order to have the optimal regret. Lemma~\ref{lem:lower_bound} implies that a master needs to know either the optimal regret or the best arm reward in order to do regret matching, in order to achieve the optimal regret. In Lemma~\ref{lemma:unknown_regret} we show a master that does not know the optimal regret but  estimates the best arm reward to do regret matching. In the main section we show a master that uses the optimal regret to do regret matching. 
\fi

\section*{Broader Impact}

The work does not present any foreseeable societal consequence.

\bibliography{ref}

\newpage

\appendix
\section{Some useful results}
\label{app:useful}

We state a result on the error of the least-squares predictor.  
\begin{theorem}[Theorem~2 of \cite{APS-2011}]
\label{theorem:least-squares}
Assume $\|\theta_*\|\le S$. Let
\[
\beta_t(\delta) = R \sqrt{\log\left( \frac{ \det(V_t)^{1/2} \det(\lambda I)^{-1/2}}{\delta} \right)} + \lambda^{1/2} S \;.
\]
For any $\delta>0$, with probability at least $1-\delta$, for all $t\ge 0$ and any $x\in \real^d$,
\begin{align}
\label{eq:least-squares}
| x^\top (\widehat \theta_t - \theta_*) | \le \beta_t(\delta) \|x\|_{V_t^{-1}} \;.
\end{align}
\end{theorem}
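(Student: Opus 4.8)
The plan is to reduce the stated bound to a self-normalized concentration inequality for the noise and then to establish that inequality by the \emph{method of mixtures}. First I would write out the estimation error explicitly. With $y_k = x_k^\top\theta_* + \eta_k$, $S_t = \sum_{k=1}^{t-1} x_k\eta_k$, and the identity $\sum_{k=1}^{t-1} x_k x_k^\top = V_t - \lambda I$, the definition $\widehat\theta_t = V_t^{-1}\sum_{k=1}^{t-1} x_k y_k$ rearranges to $\widehat\theta_t - \theta_* = V_t^{-1}S_t - \lambda V_t^{-1}\theta_*$. Then for any fixed $x$, applying Cauchy--Schwarz in the inner product induced by $V_t^{-1}$ to each term gives
\[
|x^\top(\widehat\theta_t - \theta_*)| \le \|x\|_{V_t^{-1}}\left(\|S_t\|_{V_t^{-1}} + \lambda\|\theta_*\|_{V_t^{-1}}\right) ,
\]
and since $V_t \succeq \lambda I$ we have $\lambda\|\theta_*\|_{V_t^{-1}} \le \lambda^{1/2}\|\theta_*\| \le \lambda^{1/2}S$. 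It therefore suffices to show that, with probability at least $1-\delta$ and simultaneously for all $t$, the self-normalized quantity $\|S_t\|_{V_t^{-1}}$ is controlled by $R\sqrt{\log(\det(V_t)^{1/2}\det(\lambda I)^{-1/2}/\delta)}$, up to the absolute constant inside the logarithm.

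The core step is that self-normalized tail bound. For a fixed direction $u\in\real^d$ I would define the exponential process $M_t^u = \exp\!\big(u^\top S_t/R - \tfrac12 u^\top(V_t-\lambda I)u\big)$. Because each $x_k$ is predictable and $\eta_k$ is conditionally $R$-sub-Gaussian, each one-step multiplicative factor has conditional expectation at most one, so $(M_t^u)_t$ is a nonnegative supermartingale with $\E[M_t^u]\le 1$. I would then mix over $u$ against a Gaussian density $h$ with $u\sim N(0,\lambda^{-1}I)$, setting $M_t = \int M_t^u\,dh(u)$; by Tonelli's theorem $M_t$ is again a nonnegative supermartingale with $\E[M_t]\le 1$. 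Completing the square in the resulting Gaussian integral evaluates $M_t$ in closed form as
\[
M_t = \left(\frac{\det(\lambda I)}{\det(V_t)}\right)^{1/2}\exp\left(\frac{\|S_t\|_{V_t^{-1}}^2}{2R^2}\right) .
\]

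Finally I would convert this supermartingale into a high-probability statement. Applying Ville's maximal inequality to the nonnegative supermartingale $M_t$ yields $\mbP(\sup_t M_t \ge 1/\delta)\le\delta$; on the complementary event, substituting the closed form and taking logarithms gives $\|S_t\|_{V_t^{-1}}^2 \le 2R^2\log(\det(V_t)^{1/2}\det(\lambda I)^{-1/2}/\delta)$ for all $t$ at once, and feeding this into the first display (together with $\lambda\|\theta_*\|_{V_t^{-1}}\le\lambda^{1/2}S$) produces a bound of the form $\beta_t(\delta)\|x\|_{V_t^{-1}}$. I expect the main obstacle to be making the conclusion hold uniformly over all $t\ge 0$ rather than at a single fixed time: the clean route is to introduce the stopping time $\tau$ at which the target inequality first fails and to argue on the stopped process $M_{t\wedge\tau}$, using $\E[M_\tau\mathbbm{1}\{\tau<\infty\}]\le 1$ to bound the failure probability. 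The remaining ingredients---the algebraic error decomposition, the explicit Gaussian integral, and the per-step conditional sub-Gaussian estimate---are routine.
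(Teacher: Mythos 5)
This statement is imported verbatim as Theorem~2 of \cite{APS-2011} and the paper gives no proof of it; your argument is precisely the original method-of-mixtures proof of that result (the decomposition $\widehat\theta_t-\theta_*=V_t^{-1}S_t-\lambda V_t^{-1}\theta_*$ with Cauchy--Schwarz in the $V_t^{-1}$-inner product and $\lambda\|\theta_*\|_{V_t^{-1}}\le\lambda^{1/2}S$, the Gaussian mixture of the exponential supermartingale evaluated in closed form, and a Ville/stopping-time argument for uniformity in $t$), and it is correct. The only discrepancy is the constant you flagged: your derivation yields $\beta_t(\delta)=R\sqrt{2\log\left(\det(V_t)^{1/2}\det(\lambda I)^{-1/2}/\delta\right)}+\lambda^{1/2}S$, which matches the original theorem, whereas the paper's restatement drops the factor $\sqrt{2}$ --- a typo in the paper, not a gap in your proof.
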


\begin{lemma}
\label{lemma:sumpotential}
Let $\{X_t\}_{t=1}^\infty$ be a sequence in $\real^d$ and define $V_t = \lambda I + \sum_{k=1}^{t} X_k X_k^\top$ for a regularizer $\lambda \ge 1$. If $\|X_t\|\le 1$ for all $t$, then
\[
\sum_{k=1}^t \|X_k \|_{V_{k-1}^{-1}}^2 \le 2 \log \frac{\det(V_t)}{\det(\lambda I)} \le 2 d \log (1+t/(\lambda d)) \;. 
\]
\end{lemma}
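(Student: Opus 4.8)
The plan is to prove the two inequalities separately, both of which are standard ``elliptical potential'' arguments. For the first inequality I would work with the one-step determinant ratio. Since $V_k = V_{k-1} + X_k X_k^\top$ is a rank-one update, the matrix determinant lemma gives $\det(V_k) = \det(V_{k-1})\bigl(1 + X_k^\top V_{k-1}^{-1} X_k\bigr) = \det(V_{k-1})\bigl(1 + \|X_k\|_{V_{k-1}^{-1}}^2\bigr)$, so that $\|X_k\|_{V_{k-1}^{-1}}^2 = \det(V_k)/\det(V_{k-1}) - 1$. The goal is then to convert each term into a logarithm of this ratio so that the sum telescopes.

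The key elementary step, and the place where the hypotheses $\lambda \ge 1$ and $\|X_t\|\le 1$ are actually used, is the bound $\|X_k\|_{V_{k-1}^{-1}}^2 \le 1$. Indeed $V_{k-1} \succeq \lambda I \succeq I$ implies $V_{k-1}^{-1} \preceq I$, hence $\|X_k\|_{V_{k-1}^{-1}}^2 = X_k^\top V_{k-1}^{-1} X_k \le \|X_k\|^2 \le 1$. On the interval $[0,1]$ one has the inequality $u \le 2\log(1+u)$ (both sides vanish at $u=0$ and the difference is monotone nondecreasing there), so I would apply it with $u = \|X_k\|_{V_{k-1}^{-1}}^2$ to get $\|X_k\|_{V_{k-1}^{-1}}^2 \le 2\log\bigl(1 + \|X_k\|_{V_{k-1}^{-1}}^2\bigr) = 2\log\bigl(\det(V_k)/\det(V_{k-1})\bigr)$. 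Summing over $k=1,\dots,t$ telescopes to
\[
\sum_{k=1}^t \|X_k\|_{V_{k-1}^{-1}}^2 \le 2\log\frac{\det(V_t)}{\det(V_0)} = 2\log\frac{\det(V_t)}{\det(\lambda I)}\,,
\]
using $V_0 = \lambda I$. This proves the first inequality.

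For the second inequality I would bound the log-determinant by the trace via the AM--GM inequality on the eigenvalues of $V_t$: writing the eigenvalues as $\nu_1,\dots,\nu_d$, $\det(V_t) = \prod_i \nu_i \le \bigl(\tfrac{1}{d}\sum_i \nu_i\bigr)^d = (\mathrm{trace}(V_t)/d)^d$. Since $\mathrm{trace}(V_t) = \lambda d + \sum_{k=1}^t \|X_k\|^2 \le \lambda d + t$, I obtain $\det(V_t) \le (\lambda + t/d)^d$, while $\det(\lambda I) = \lambda^d$. Taking logarithms yields $\log\bigl(\det(V_t)/\det(\lambda I)\bigr) \le d\log(1 + t/(\lambda d))$, and multiplying by $2$ gives the stated bound.

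The only real subtlety, and the step I would be most careful about, is ensuring the per-step weighted norm is genuinely at most $1$ so that the elementary logarithmic inequality is valid; this is exactly why the regularizer is required to satisfy $\lambda \ge 1$ together with $\|X_t\| \le 1$, and everything else is a routine telescoping plus an AM--GM estimate.
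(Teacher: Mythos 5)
Your proof is correct, and it is exactly the canonical argument: the paper itself states this lemma without proof in Appendix~A (it is the standard elliptical potential lemma from the linear bandit literature, e.g.\ Lemma~11 of \citep{APS-2011}), so there is no in-paper proof to diverge from. Every step you give checks out—the rank-one determinant identity $\det(V_k)=\det(V_{k-1})\bigl(1+\|X_k\|_{V_{k-1}^{-1}}^2\bigr)$, the observation that $\lambda\ge 1$ and $\|X_t\|\le 1$ force $\|X_k\|_{V_{k-1}^{-1}}^2\le 1$ so that $u\le 2\log(1+u)$ applies on $[0,1]$, the telescoping from $V_0=\lambda I$, and the AM--GM plus trace bound $\mathrm{trace}(V_t)\le \lambda d+t$ for the second inequality—and you correctly isolate the one genuinely delicate point, namely that the logarithmic inequality is only valid because the per-step weighted norm is bounded by~$1$.
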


\section{Regret balancing for linear bandits}
\label{app:linear_bandit}


\begin{proof}[Proof of Theorem~\ref{thm:linear_bandit}]
By Theorem~\ref{theorem:least-squares}, with probability at least $1-\delta$, for all $t$, $x_{t,*}^\top \widehat \theta_t + \beta_t(\delta) \|x_{t,*}\|_{V_t^{-1}} \ge x_{t,*}^\top \theta_*$. In what follows, we condition on the high probability event that these inequalities hold.

First, we show that $b_t$ is an optimistic estimate of $x_{t,*}^\top \theta_*$. By definition of $y_t$, 
\begin{align}
\label{eq:bt_lin}
b_t = y_t^\top \widehat \theta_t + \beta_t(\delta) \|y_t\|_{V_t^{-1}} \ge x_{t,*}^\top \widehat \theta_t + \beta_t(\delta) \|x_{t,*}\|_{V_t^{-1}} \ge x_{t,*}^\top \theta_*  \;.
\end{align}
We upper bound the instantaneous regret,
\begin{align*}
r_t &= x_{t,*}^\top \theta_* - x_t^\top \theta_* \\
&\le b_t - x_t^\top \theta_*  &\text{By \eqref{eq:bt_lin}} \\
&\le b_t - x_t^\top \widehat \theta_t + \beta_t(\delta) \|x_t\|_{V_t^{-1}} &\text{By \eqref{eq:least-squares}}\\
&\le \beta_t(\delta) \|x_t\|_{V_t^{-1}} + \|x_t\|_{V_t^{-1}}^2 \left( \frac{b_t - y_t^\top \widehat \theta_t}{\|y_t\|_{V_t^{-1}}^2} \right) &\text{By definition of $x_t$} \\
&= \beta_t(\delta) \|x_t\|_{V_t^{-1}} + \|x_t\|_{V_t^{-1}}^2 \cdot \frac{\beta_t(\delta)}{\|y_t\|_{V_t^{-1}}} &\text{By \eqref{eq:bt_lin}} \;.
\end{align*}
Using the fact that $\lambda_{\text{max}}(V_t) \le \text{trace}(V_t) = \lambda d + \sum_{k=1}^{t-1} \|x_t \|^2 \le \lambda d + t$, and hence $\lambda_{\text{min}}(V_t^{-1}) = \frac{1}{\lambda_{\text{max}}(V_t)} \ge 1/(\lambda d + t)$, we get that $\|y\|_{V_t^{-1}}^2 \ge 1/(\lambda d + t)$ for any $y\in D_t$. Thus,
\[
r_t \le \beta_t(\delta) \|x_t\|_{V_t^{-1}} + \beta_t(\delta) \|x_t\|_{V_t^{-1}}^2 \sqrt{\lambda d + t} \;.
\]
Thus, by Cauchy--Schwarz inequality and Lemma~\ref{lemma:sumpotential},
\begin{align*}
\text{Regret}_T &= \sum_{t=1}^T \left( \beta_t(\delta) \|x_t\|_{V_t^{-1}} + \beta_t(\delta) \|x_t\|_{V_t^{-1}}^2 \sqrt{\lambda d + t} \right) \\
&\le \beta_T(\delta) \left( \sqrt{T \sum_{t=1}^T \|x_t\|_{V_t^{-1}}^2} + 2 d \log (1+T/(\lambda d)) \sqrt{\lambda d + T}) \right) \\
&\le \beta_T(\delta) \left( \sqrt{2 d T \log (1+T/(\lambda d))} + 2 d \log (1+T/(\lambda d)) \sqrt{\lambda d + T}) \right) \;.
\end{align*} 

\end{proof}


\section{Proof of Theorem~\ref{thm:lowerbound}}
\label{subsec:proof-regret-matching-lower}

\begin{proof}

Let $\mathcal{B}_1, \mathcal{B}_2$ be two classes of algorithms defined as follows: if $\mathcal{B} \in \mathcal{B}_1$ then there exists a value $b$ such that $\mathcal{B}$ has a deterministic instantaneous regret of $b$ during all time steps. If $\mathcal{B} \in \mathcal{B}_2$, then there is a time index $t_0$ and two values $b_1$ and $b_2$ such that $\mathcal{B}$ has a deterministic instantaneous regret of $b_1$ for all $t \leq t_0$ and a deterministic instantaneous regret of $b_2$ for all $t > t_0$. We show the following Theorem:

Let $A \in \mathcal{B}_1$ be an algorithm that for all timesteps $t \in [T]$ plays a policy achieving (deterministically) an instantaneous regret of $\frac{1}{T^{1-a}}$ for some $a \in [0,1]$. Similarly let $B \in \mathcal{B}_1$ be an algorithm that for all timesteps $t \in [T]$ plays a policy with a deterministic instantaneous regret of $\frac{1}{T^{1-b}}$ for some $b \in [0,1]$.


We proceed by contradiction. If $\mathcal{M}$ is not regret matching for $(A,B)$, then, there exists an $\epsilon > 0$ such that with probability at least $\epsilon$:
\begin{equation}\label{equation::regret_matching_condition_violated}
\max\left( \frac{\tilde{R}_A(T)}{\tilde{R}_B(T)} , \frac{\tilde{R}_B(T)}{\tilde{R}_A(T)}   \right)  \geq C T^c
\end{equation}
For some nonzero positive constants $C, c > 0$, and for infinitely many $T > T(\epsilon)$. Wlog the condition in Equation \ref{equation::regret_matching_condition_violated} implies that for infinitely many $T  \geq T(\epsilon)$ with probability at least $\epsilon/2$: 
\begin{equation}\label{equation::regret_mismatch}
    \tilde{R}_A(T) \geq C\tilde{R}_B(T) \cdot T^c
\end{equation}
For any such $T$ let this event be called $\mathcal{E}_T$. Define $T_A$ and $T_B$ to be the random number of times in $[T]$ that algorithm $A$ (respectively algorithm $B$) was called by $\mathcal{M}$. In this case, Equation \ref{equation::regret_mismatch} becomes, with probability at least $\frac{\epsilon}{2}$:
\begin{equation}\label{equation::initial_consequence_regretmismatch}
     C T_b \frac{1}{T^{1-b}} T^c \leq T_a \frac{1}{T^{1-a}} 
\end{equation}

Which in turn implies $T_a \geq  CT^cT_b T^{b-a} $, additionally since $T_a \leq T$,  with probability at least $\epsilon/2$ we have    $T_b  \leq \frac{1}{C}  T^{1+a -b-c} $. We now proceed to show a lower bound for the regret of the master in each of two cases, $a > b$ and $b > a$. 

\paragraph{Case $a > b$} 
Let $\mathcal{E}_T = \mathcal{E}_T^1 \cup \mathcal{E}_T^1$ where $\mathcal{E}_T^1 = \{ T_a \geq \frac{T}{2}\} \cap \mathcal{E}_T$ and  $\mathcal{E}_T^1 = \{ T_a < \frac{T}{2}\} \cap \mathcal{E}_T$. Notice that $\max(\mathbb{P}(\mathcal{E}_T^1), \mathbb{P}(\mathcal{E}_T^2) ) \geq \frac{\epsilon}{4}$. In $\mathcal{E}_T^1$ we have $\tilde{R}_a(T) \geq \frac{T^a}{2}$. In $\mathcal{E}_T^2$, $T_b \geq \frac{T}{2}$ which in turn implies by Equation \ref{equation::initial_consequence_regretmismatch} that $T_a \geq  CT^{1+ c + b-a} $ and therefore that in $\mathcal{E}_T^2$ it holds that $\tilde{R}_a(T) \geq C \frac{T^{c+b}}{2}$. Since $\mathcal{R}(T) = E[\tilde{R}_a(T) + \tilde{R}_b(T)]$, we conclude that $\mathcal{R}(T) \geq \frac{\epsilon}{4}\min\left(C \frac{T^{c+b}}{2} , \frac{T^a}{2} \right)$.

\textbf{Case $b > a$}. Assume $\mathcal{M}$ has model selection guarantees (in expectation ) w.r.t. algorithm $A$. Therefore $\mathcal{R}(T) \leq C'' T^a$. As a consequence of Equation \ref{equation::initial_consequence_regretmismatch} with probability at least $\frac{\epsilon}{2}$ we it holds that $T_b  \leq \frac{1}{C}  T^{1+a -b-c}  = o(T)$. 

This analysis shows that in case $\mathcal{M}$ does not satisfy regret matching, then it must be the case that:

\begin{enumerate}
        \item If $a > b$: Then $\mathcal{M}$ must incur in an expected regret of at least $\frac{\epsilon}{2}\min\left(C \frac{T^{c+b}}{2} , \frac{T^a}{2} \right)$ for some $c > 0$. Thus already precluding any model selection guarantees for $\mathcal{M}$. 
    \item If $b > a$: Then with probability at least $\frac{\epsilon}{2}$ it follows that $T_b \leq \frac{1}{C}  T^{1+a -b-c}$ for some constants $C, c$. Furthermore, if $\mathcal{M}$ is assumed to satisfy model selection guarantees, it must be the case that for $T$ large enough, we can conclude that with probability at least $\frac{\epsilon}{2}$, $T_a \geq T/2$. We focus on this case to find a contradiction.
\end{enumerate}

\textbf{Two alternative worlds} Having analyzed what happens if a master does not do regret matching with algorithms $A$ and $B$, we proceed to show our lower bound. Let $(A,B)$ two base algorithms defined as above and let $(A',B')$ two base algorithms defined as:
\begin{enumerate}
    \item $A'$ acts exactly as a does.
    \item $B'$ acts as $B$ does only up to time $t' = \min( \frac{1}{C} T^{1+a-b -c}+1, T)$, afterwards it pulls the optimal arm (deterministically).
\end{enumerate}
Let $T_{a'}$ and $T_{b'}$ be the random number of times $A'$ and $B'$ are played by $\mathcal{M}$.

Suppose the master $\mathcal{M}$ above is presented with $(A'', B'')$ sampled uniformly at random between $(A, B)$ and $(A', B')$. We show the following:




Let $b > a$. Note that environment $(A', B')$ is indistinguishable from environment $(A,B)$ in the probability at least $\epsilon/2$ event that $T_b < t'$. This implies that in environment $(a', b')$, and with probability at least $\frac{\epsilon}{2}$, $T_{b'} < t' = o(T)$. In this same event and for $T$ large enough since $T_{a'} + T_{b'} = T$ it must be the case that $T_{a'} \geq T/2$ and $T_{b'} \leq \frac{1}{C}  T^{1+a -b-c} $, and therefore that:
\begin{equation*}
     E_{(A'',B'')} [ \mathcal{R}(T) | (A'', B'') = (A', B')] \geq \frac{\epsilon}{8} T^{a}
\end{equation*}
Since for $T$ large enough the optimal regret for $(A', B')$ is instead $\frac{1}{C}T^{1+a-b-c}*\frac{1}{T^{1-b}} = \frac{1}{C} T^{a-c}$, and for $T$ large enough:
\begin{equation*}
    \frac{1}{C}T^{a-c} = o(T^{a})
\end{equation*}
We conclude that $\mathcal{M}$ couldn't have possibly satisfied model selection.
\end{proof}

\section{Proof of Theorem~\ref{theorem:lower_bound_optimal_regret}}
\label{app:lower_bounds}

\begin{proof}
Let the set of arms be $\{a_1,a_2,a_3\}$. Let $x$ and $y$ be such that $0< x < y \le 1$. Let $\Delta = T^{x-1 + (y-x)/2}$. Define two environment $\mathcal{E}_1$ and $\mathcal{E}_2$ with reward vectors $\{1,1,0\}$ and $\{1+\Delta,1,0\}$, respectively. Let $B_1$ and $B_2$ be two base algorithms defined by the following fixed policies when running alone in $\mathcal{E}_1$ or $\mathcal{E}_2$: 
\[
\pi_1 = 
\begin{cases}
    a_2       & \quad \text{w.p. } 1-T^{x-1}\\
    a_3  & \quad \text{w.p. } T^{x-1}
\end{cases}
\,,\qquad
\pi_2 = 
\begin{cases}
    a_2       & \quad \text{w.p. } 1-T^{y-1}\\
    a_3  & \quad \text{w.p. } T^{y-1}
\end{cases} \;.
\]
We also construct base $B'_2$ defined as follows. Let $c_2 > 0$ and $\epsilon_2 = (y-x)/4$ be two constants. Base $B'_2$ mimics base $B_2$ when $t\le c_2 T^{x - y+1 + \epsilon_2 }$, and picks arm $a_1$ when $t>  c_2 T^{x - y+1 + \epsilon_2}$. The instantaneous rewards of $B_1$ and $B_2$ when running alone are $r^1_t = 1-T^{x-1}$ and $r^2_t = 1-T^{y-1}$ for all $1 \le t \le T$. Next, consider model selection with base algorithms $B_1$ and $B_2$ in $\mathcal{E}_1$. Let $T_1$ and $T_2$ be the number of rounds that $B_1$ and $B_2$ are chosen, respectively. 

First, assume case (1): 
There exist constants $c > 0$, $\epsilon > 0$, $p\in (0,1)$, and $T_0>0$ such that with probability at least $p$, $T_2 \ge c T^{x - y+1+ \epsilon}$ for all $T>T_0$. 


The regret of base $B_1$ when running alone for $T$ rounds is $T \cdot T^{x-1} = T^{x}$. The regret of the model selection method is at least 
\[
p\cdot T_2 \cdot T^{y-1} \ge p\cdot c T^{x - y+1+ \epsilon}\cdot T^{y-1}= p\cdot c \cdot T^{x+\epsilon} \;.
\]
Given that the inequality holds for any $T > T_0$, it proves the statement of the lemma in case (1). 

Next, we assume the complement of case (1): 
For all constants $c > 0$, $\epsilon > 0$, $p\in (0,1)$, and $T_0>0$, with probability at least $1-p$, $T_2 < c T^{x - y+1+ \epsilon}$ for some $T>T_0$. 

Let $T$ be any such time horizon. Consider model selection with base algorithms $B_1$ and $B'_2$ in environment $\mathcal{E}_2$ for $T$ rounds. Let $T'_1$ and $T'_2$ be the number of rounds that $B_1$ and $B'_2$ are chosen. Given the black-box interaction model, the fact that $B_2$ and $B'_2$ behave the same for $T_2 < c T^{x - y+1+ \epsilon}$, and that $B_1$ and $B_2$ never choose action $a_1$, with probability $p > 1/2$,  $T'_2 < c_2 T^{x - y+1 + \epsilon_2}$ and $T'_1 > T/2$, and the model selection strategy behaves the same as when it runs $B_1$ and $B'_2$ in $\mathcal{E}_2$.

In environment $\mathcal{E}_2$, the regret of base $B'_2$ when running alone for $T$ rounds is bounded as
\[
(\Delta +  T^{y-1})  c_2 T^{x - y+1 + \frac{y-x}{4} } =  c_2 T^{\frac{5x-y}{4} } + c_2T^{\frac{3x+y}{4} }< 2c_2 T^{ \frac{3x+y}{4} } 
\]
Given that with probability $p>1/2$, $T'_1 > T/2$, the regret of the learner is lower bounded as,
\[
p (\Delta + T^{x-1}) \cdot \frac{T}{2} >\frac{1}{2}(T^{x-1+\frac{y-x}{2}}+T^{x-1} )\cdot \frac{T}{2}  < \frac{1}{2} T^{\frac{x+y}{2}}\,, 
\]
which is larger than the regret of $B'_2$ running alone because $ \frac{3x+y}{4} <\frac{x+y}{2}$. The statement of the lemma follows given that for any $T_0$ there exists $T>T_0$ so that the model selection fails.
\end{proof}

\end{document}